\documentclass[sigconf]{acmart}
\AtBeginDocument{%
  }

\setcopyright{acmlicensed}
\copyrightyear{2018}
\acmYear{2018}
\acmDOI{XXXXXXX.XXXXXXX}
\acmConference[Conference acronym 'XX]{Make sure to enter the correct
  conference title from your rights confirmation email}{June 03--05,
  2018}{Woodstock, NY}
\acmISBN{978-1-4503-XXXX-X/2018/06}

\usepackage[utf8]{inputenc} 
\usepackage[T1]{fontenc}    
\usepackage{hyperref}       
\usepackage{url}            
\usepackage{booktabs}       
\usepackage{amsfonts}       
\usepackage{nicefrac}       
\usepackage{microtype}      
\usepackage{xcolor}         
\usepackage{enumitem}
\setlist[itemize]{noitemsep}
\usepackage{adjustbox}
\usepackage{algorithm}
\usepackage{algpseudocode}
\usepackage{amsmath}
\usepackage{amsthm}
\newtheorem{theorem}{Theorem}

\usepackage{graphicx}
\usepackage{multirow}
\usepackage{wrapfig} 
\usepackage{subcaption}
\usepackage[table]{xcolor}
\newcommand{\NA}{\cellcolor{black!8}\textsc{N/A}}

\usepackage[T1]{fontenc}
\usepackage[utf8]{inputenc}




\begin{document}

\title{Divide by Question, Conquer by Agent: SPLIT-RAG with Question-Driven Graph Partitioning}

\author{Ruiyi Yang}
\affiliation{%
  \institution{University of New South Wales}
  \city{Sydney}
  \state{NSW}
  \country{Australia}
}
\email{ruiyi.yang@student.unsw.edu.au}

\author{Hao Xue}
\affiliation{%
  \institution{University of New South Wales}
  \city{Sydney}
  \state{NSW}
  \country{Australia}
}
\email{hao.xue1@unsw.edu.au}

\author{Imran Razzak}
\affiliation{%
  \institution{Mohamed Bin Zayed University of Artificial Intelligence}
  \city{Abu Dhabi}
  \country{UAE}
}
\email{imran.razzak@mbzuai.ac.ae}

\author{Shirui Pan}
\affiliation{%
  \institution{Griffith University}
  \city{Queensland}
  \country{Australia}
}
\email{shiruipan@gmail.com}

\author{Hakim Hacid}
\affiliation{%
  \institution{Technology Innovation Institute}
  \city{Abu Dhabi}
  \country{UAE}
}
\email{hakim.hacid@tii.ae}

\author{Flora D. Salim}
\affiliation{%
  \institution{University of New South Wales}
  \city{Sydney}
  \state{NSW}
  \country{Australia}
}
\email{flora.salim@unsw.edu.au}

\renewcommand{\shortauthors}{Ruiyi et al.}

\begin{abstract}
 Retrieval-Augmented Generation (RAG) systems empower large language models (LLMs) with external knowledge, yet struggle with efficiency-accuracy trade-offs when scaling to large knowledge graphs. Existing approaches often rely on monolithic graph retrieval, incurring unnecessary latency for simple queries and fragmented reasoning for complex multi-hop questions. To address these challenges, this paper propose \textbf{SPLIT-RAG}, a multi-agent RAG framework that addresses these limitations with question-driven semantic graph partitioning and collaborative subgraph retrieval. The innovative framework first create \textbf{S}emantic \textbf{P}artitioning of \textbf{L}inked \textbf{I}nformation, then use the \textbf{T}ype-Specialized knowledge base to achieve \textbf{Multi-Agent RAG}. The attribute-aware graph segmentation manages to divide knowledge graphs into semantically coherent subgraphs, ensuring subgraphs align with different query types, while lightweight LLM agents are assigned to partitioned subgraphs, and only relevant partitions are activated during retrieval, thus reduce search space while enhancing efficiency. Finally, a hierarchical merging module resolves inconsistencies across subgraph-derived answers through logical verifications. Extensive experimental validation demonstrates considerable improvements in both \textbf{accuracy} and \textbf{efficiency} compared to existing approaches.
\end{abstract}

\begin{CCSXML}
<ccs2012>
    <concept>
        <concept_id>10010147.10010178.10010179.10010182</concept_id>
        <concept_desc>Computing methodologies~Natural language generation</concept_desc>
        <concept_significance>500</concept_significance>
        </concept>
    <concept>
        <concept_id>10010147.10010178.10010179.10003352</concept_id>
        <concept_desc>Computing methodologies~Information extraction</concept_desc>
        <concept_significance>500</concept_significance>
    </concept>
</ccs2012>
\end{CCSXML}

\ccsdesc[500]{Computing methodologies~Natural language generation}
\ccsdesc[500]{Computing methodologies~Information extraction}

\keywords{Retrieval Augmented Generation, Knowledge Graph, Multi-agent, Graph Partitioning}

\begin{teaserfigure}
\centering         
\includegraphics[width=0.7\textwidth]{./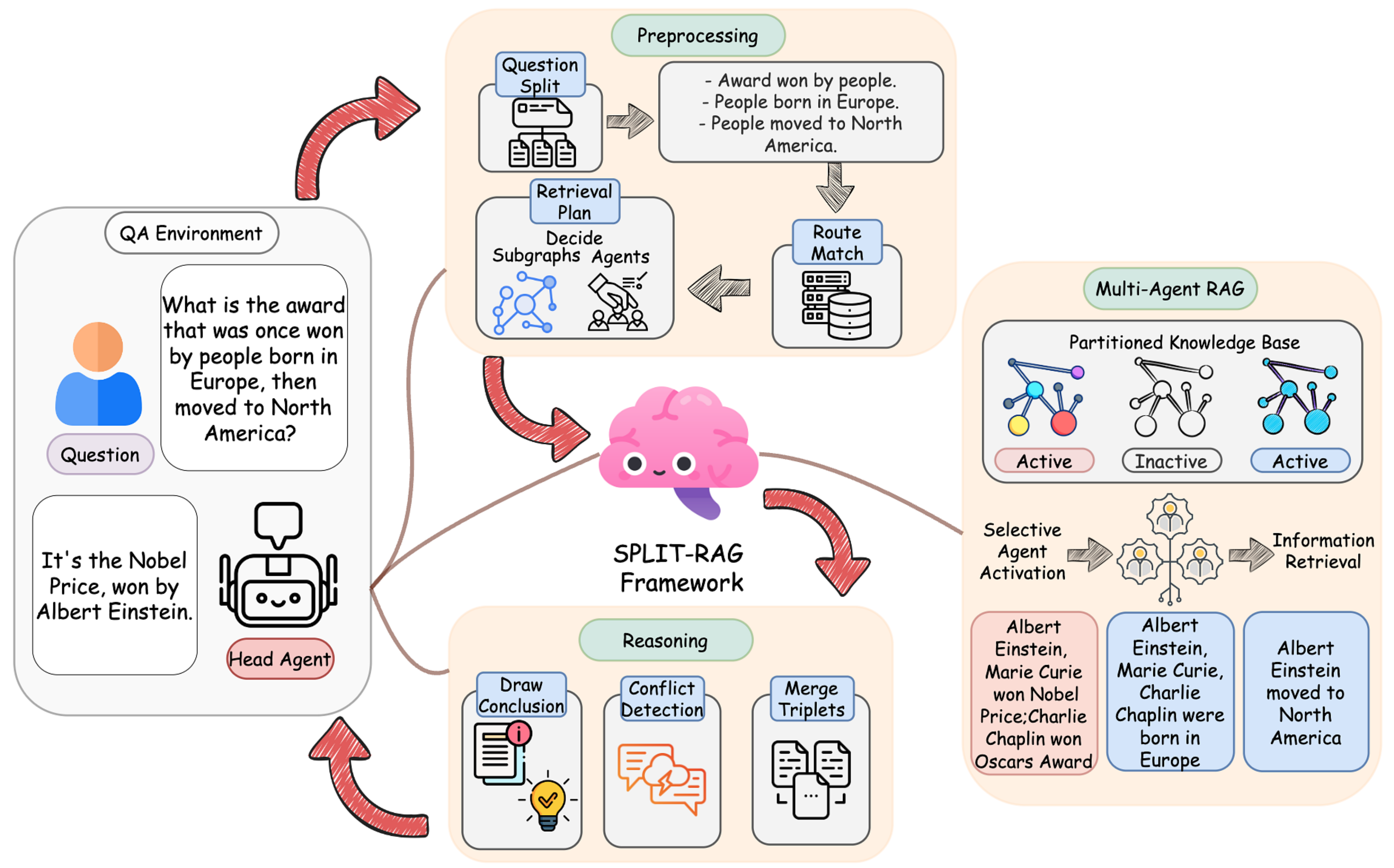}
\caption{Utilizing SPLIT-RAG Framework on Example Dataset: Knowledge base are preprocessed into multiple sub-bases according to attributes of training questions. New query is matched in question base by its entity \& relation, as well as the similarity of the QA route, to get the retrieval plan, including the decomposed questions, parts of needed knowledge bases, as well as used agents. After multi-agent retrieval through chosen subgraphs and agents, the triplet sets and route information are merged together after eliminating the conflict, to generate the final answer and explanation.}
\label{SPLIT-RAG example}
\Description{Banner for SPLIT-RAG.}
\end{teaserfigure}



\maketitle

\section{Introduction}
Even State-of-the-art LLMs may fabricate fact answering a question beyond it's pretrained knowledge~\cite{zhang2023siren}, or giving a wrong answer for those require latest data~\cite{kandpal2023large}. Retrieval-Augmented Generation (RAG) methods solve this problem by connecting external knowledge base to LLM without the need to retrain the model. By offering reliable supplementary knowledge to LLM, factual errors are reduced especially in those domain-specific questions, leading to higher accuracy  and less hallucinations~\cite{zhu2021retrieving, gao2023retrieval, zhao2024retrieval}. Moreover, by updating database, up-to-date information can correct LLMs outdated memory, generating correct response in those fast evolving areas, like politic~\cite{arslan2024political, khaliq2024ragar}, traffic~\cite{hussien2025rag}, and business~\cite{arslan2024business, de2024retail}. RAG frameworks build up a bridge between black-box LLMs and manageable database to provide external knowledge. However, effectively dealing with the knowledge base remains a problem. Current RAG framework still faces several challenged $\mathbf{C_1,C_2} \text{ and } \mathbf{C_3}$:

$\mathbf{C_1}:\textbf{Efficiency}$: LLMs face both time and token limits in querying multi-document vectors. The millions of documents would slow down search speed of LLM Retriever, while tiny-size databases are not enough to cover many domain-related questions that need identification for specific knowledge. While large-scale knowledge base contains more reliable and stable information, leading to a higher accuracy, the redundant information costs extra latency during the query process. On the other hand, utilization of small-scaled database would help solving domain specific questions, like medical~\cite{wu2024medical} or geography~\cite{dong2024geo}, but it would be hard to apply the framework on other area due to the specialty of data structure and limit knowledge.

$\mathbf{C_2}:\textbf{Hallucination}$ Knowledge bases are not following the same structure for data storage, same information can be expressed by different format, including text, tables, graph, pictures, etc. The diverse structure of data may pose extra hallucinations~\cite{csakar2025maximizing}. Even with correct extra data input, LLM may still not following the facts, thus an answer in consistent with retrieved information would not be exactly guaranteed~\cite{gao2023enabling}. 

$\mathbf{C_3}:\textbf{Knowledge Conflict}$ It's hard for LLMs to decide whether external databases contain errors. The mixture of inconsistent correct and wrong data generates conflicts even if multiple knowledge sources are used, . On the other hand, knowledge is timeliness. Different frequency of updating the knowledge base would also cause errors~\cite{fan2024survey}.

To solve above limitations for existing RAG framework, an improved approach is required that i) \textbf{Limits computation over entire knowledge base} to accelerate the retrieval; ii) \textbf{Avoids over-retrieval} to reduce hallucinations and iii) \textbf{Detects and clears unsupported claims} arising from knowledge conflicts -- This work introduces \textbf{S}emantic \textbf{P}artitioning of \textbf{L}inked \textbf{I}nformation for \textbf{T}ype-Specialized \textbf{Multi-Agent RAG}(\textbf{SPLIT-RAG}). Figure~\ref{SPLIT-RAG example} shows an example of how SPLIT-RAG framework works under a toy database, which contains essential knowledge and historical questions. The SPLIT-RAG framework, while making use of historical (training) questions to divide large-scale general knowledge base, uses multiple agents to answer different types of questions. Agents choosing process further enhancing retrieval speed by selective using only useful knowledge. Finally, conflicts are detected after merging triplets from multiple retrievers to generate final answer. Specifically, our \textbf{key contributions} are:

\begin{itemize}
    \item \textbf{QA-Driven Graph Partitioning and Subgraph-Guided Problem Decomposition}: A novel mechanism is proposed that dynamically partitioning graphs into semantically coherent subgraphs through training question analysis (entity/relation patterns, intent clustering), ensuring each subgraph aligns with specific query types. Also, a hierarchical query splitting is developed to decomposing complex queries into sub-problems constrained by subgraph boundaries, enabling stepwise agent-based reasoning.
    
    \item \textbf{Efficient Multi-Agent RAG Framework}: A distributed RAG architecture where lightweight LLM agents selectively activate relevant subgraphs during retrieval, achieving faster inference higher accuracy compared to monolithic retrieval baselines, while maintaining scalability. Experiments are also done to demonstrate the tradeoff between \textbf{latency and accuracy} with different LLM agents usage.
    
    \item \textbf{Conflict-Resistant Answer Generation}: While results are aggregate from different agents, potential conflicts will be solved by importing a confidence score for each set, and misleading triplets with low score will be cleared out, then head agent uses facts and supporting evidence to answer the original questions.
\end{itemize}

\section{Background}

\subsection{RAG with LLMs}

RAG systems apply external knowledge bases to LLMs, retrieving extra knowledge according to queries and thereby improving the accuracy of LLM response. External databases ensure knowledge offered is domain-specific and timely, adding reliability and interpretability~\cite{lewis2020retrieval, jiang2023active}. RAG systems are designed and explored from different perspectives, like database modalities~\cite{zhao2024retrieval}, model architectures, strategies for training~\cite{fan2024survey}, or the diversity of domains that fit the system~\cite{gao2023retrieval}. \textbf{Accuracy} of knowledge retrieval and \textbf{quality} of responses are two key factors for RAG systems evaluation~\cite{yu2024evaluation, pan2024unifying}. 

Recent researches have combined graph-structured data into RAG systems(GraphRAG) to improve the efficiency of knowledge interpretability by capturing relationships between entities and utilizing triplets as the primary data source~\cite{peng2024graph,hu2024grag}. However, seldom researches consider the \textbf{efficiency} of query speed. Large size knowledge base contains too much redundancy and unnecessary information, which would cause latency during retrieval. Our work aims to extend existing effective approach using structured graph data, while reducing latency and redundancy by segmenting the graph knowledge base into smaller subgraphs.

\begin{figure*}[t] 
\centering         
\includegraphics[width=0.9\textwidth]{./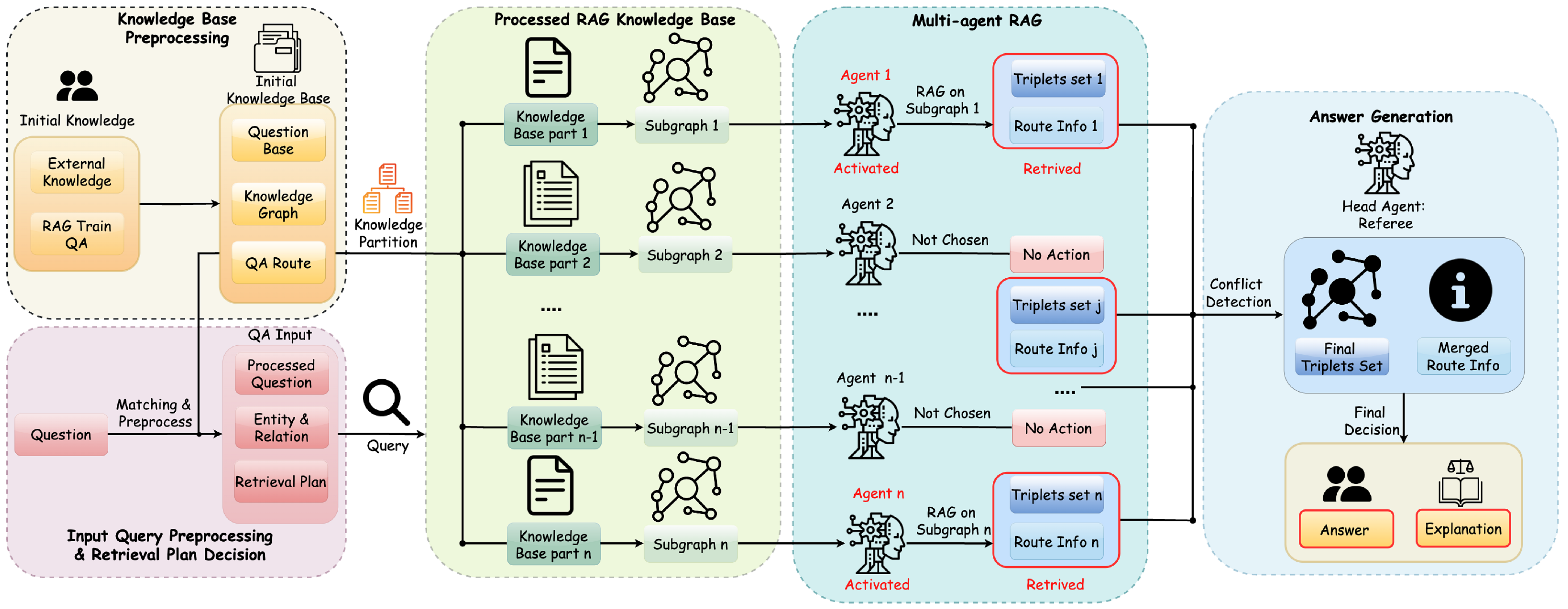}
\caption{Complete structure of the SPLIT-RAG framework, with knowledge base preprocessing, retrieval plan decision generation, subgraph partition, multi-agent RAG, and answer generation.
}
\label{SPLIT-RAG framework}
\Description{SPLIT-RAG complete framework.}
\end{figure*}

\subsection{Knowledge Graph Partition}

Graph partition, as an NP-complete problem, has a wide range of applications, including parallel processing, road networks, social networks, bioinformatics, etc.\cite{bulucc2016recent, kim2011genetic}. Partitioning a graph can be done from two perspectives: edge-cut and vertex-cut. Many classic graph partition algorithms have been proved to be effective mathematically. Kernighan-Lin algorithm~\cite{kernighan1970efficient} partition the graph by iteratively swapping nodes between the two partitions to reduce the number of edges crossing between them. Similarly, Fiduccia-Mattheyses algorithm~\cite{fiduccia1988linear} uses an iterative mincut heuristic to reduce net-cut costs, and can be applied on hypergraph. METIS~\cite{karypis1997metis} framework use three phases multilevel approach coming with several algorithms for each phase to generate refined subgraphs. More recently, JA-BE-JA algorithm~\cite{rahimian2015distributed} uses local search and simulated annealing techniques to fulfill both edge-cut and vertex-cut partitioning, outperforming METIS on large scale social networks. 

However, many algorithms are not applicable on knowledge graph, since they generally apply global operations over the entire graph, while sometimes the knowledge base would be too large, under which dealing with them may need compression or alignment first~\cite{qin2020g, trisedya2023align}. LargeGNN proposes a centrality-based subgraph generation algorithm to recall some landmark entities serving as the bridges between different subgraphs. Morph-KGC~\cite{arenas2024morph} applies groups of mapping rules that generate disjoint subsets of the KGs, decreases both materialization time and the maximum peak of memory usage. While existing works focus on graph only, our work manages to consider the KG-partition problem from the natures of RAG by making full use of historical questions as the guide.

\subsection{LLM-based multi-agent systems}

Although single LLMs already meet demands of many tasks, they may encounter limitations when dealing with complex problems that require collaboration and multi-source knowledge. LLM based multi-agent systems (LMA) managed to solve problem through diverse agent configurations, agent interactions, and collective decision-making processes~\cite{li2024survey}. LMA systems can be applied to multiple areas, including problem solving scenes like software developing~\cite{dong2024self, wu2024chateda}, industrial engineering~\cite{mehta2023improving, xia2023towards} and embodied agents~\cite{brohan2023can, huang2022inner}; LMA systems can also be applied on world simulation, in application of societal simulation~\cite{gao2023s3}, recommendation systems~\cite{zhang2024agentcf}, and financial trading~\cite{guo2023gpt, zhao2023competeai}. An optimal LMA system should enhance individual agent capabilities, while optimizing agent synergy~\cite{he2024llm}.

Subgraphs used in our QA scenario perfectly match the necessity of applying multi-agent RAG framework. While each agent in charge of a subset of knowledge base, our framework applies only active agents collaborate together, specializing each agent's task.

\section{Divide by Question, Conquer by Agent: SPLIT-RAG framework}

The SPLIT-RAG framework, denoted by \(F\), contains a multi-agent generator \(\hat{\mathcal{G}}\) with a retrieval module \(R=(\tau,\psi)\).
Given a training question set \(\mathcal{Q}_{\text{train}}\) and a knowledge graph \(D=(\mathcal{V},\mathcal{E})\),
the indexer \(\tau\) constructs a partitioned knowledge base \(\hat{\mathcal{K}}=\{\hat{D}_1,\ldots,\hat{D}_M\}\).
At inference, the retriever \(\psi\) selects subgraphs from \(\hat{\mathcal{K}}\) conditioned on an input query \(q\),
and the generator \(\hat{\mathcal{G}}\) produces the final answer.

\begin{equation}
\label{eq:framework}
F \,=\, \bigl(\hat{\mathcal{G}},\, R=(\tau,\psi)\bigr),\;
\hat{\mathcal{K}} \,=\, \tau(\mathcal{Q}_{\text{train}}, D),\;
F(q,D) \,=\, \hat{\mathcal{G}}\!\bigl(q,\, \psi(q, \hat{\mathcal{K}})\bigr).
\end{equation}

Specifically, the whole process for SPLIT-RAG framework contain five components: 1) Question-Type Clustering and Knowledge Graph Partitioning; 2) Agent-Subgraph and Question-Agent matching; 3) Retrieval plans selection for new questions; 4) Multi-agent RAG; 5) Knowledge aggregation with conflict resolution. Figure~\ref{SPLIT-RAG framework} concludes the general process for the entire framework, while main notations are listed in Table~\ref{table:notation} and Appendix~\ref{app:notation}.

\subsection{Question-Type Clustering and Knowledge Graph Partitioning} \label{text:3-1}

\paragraph{Question preprocessing.}
Each training question is transformed into three aligned contexts
(Appendix~\ref{app:training_questions_preprocessing} shows an example):
\begin{itemize}
    \item \textbf{Semantic context} \(\mathbf{Q}_s\): stop words removed; entity mentions marked as \([E]\).
    \item \textbf{Entity-type context} \(\mathbf{Q}_e\): entity mentions replaced by their KB types (labels \(L_{\text{etype}}\));
          this view supports semantically question type-driven clustering.
    \item \textbf{Path context} \(\mathbf{p}\): a supervision trace that records a KG path used to reach the gold answer.
\end{itemize}
Clustering uses either provided question-type labels \(L_{\text{qtype}}\) or
type signatures derived from \(\mathbf{Q}_e\), yielding semantically coherent groups.


\paragraph{\(\mathcal{Q}_{\text{train}}\)-guided KG partitioning.}
Let \(\mathcal{P}=\{\mathbf{p}_1,\ldots,\mathbf{p}_m\}\) denote the set of recorded paths.
A path \(\mathbf{p}_i\) is represented as
\begin{equation}
\label{eq:path-repr}
\mathbf{p}_i \;=\; \bigl\{\, e_{p1};\; r_{p12};\; e_{p2};\; \ldots;\; r_{p(n-1)(n)};\; e_{pn} \,\bigr\}.
\end{equation}
Each \(\mathbf{p}_i\) is sliced into consecutive at-most-2-hop fragments:
\begin{equation}
\label{eq:twohop-slices}
\phi(\mathbf{p}_i) \;=\; \bigcup_{j=1}^{n-1} \bigl( e_{pj};\; r_{p(j)(j+1)};\; e_{p(j+1)} \bigr),
\qquad
\tilde{\mathcal{P}} \;=\; \bigcup_{i=1}^{m} \phi(\mathbf{p}_i).
\end{equation}

The partition seeks subgraphs that are informative for the observed reasoning traces while remaining compact.
Define the information gain \(\mathbf{IG}\) over a candidate partition \(\mathcal{S}=\{s_1,\ldots,s_K\}\) as
\begin{equation}
\label{eq:ig}
\mathbf{IG}(\mathcal{S})
\;=\;
\sum_{s_i \in \mathcal{S}}
\Bigl[
\underbrace{\mathit{H}(\tilde{\mathcal{P}}\mid s_i)}_{\text{conditional entropy}}
\;-\;
\lambda \cdot
\underbrace{\mathcal{H}(s_i)}_{\text{size penalty}}
\Bigr],
\end{equation}

where the conditional entropy and size penalty:
\begin{equation}
\label{eq:cond-entropy}
\mathit{H}(\tilde{\mathcal{P}}\mid s_i)
\;=\;
-\sum_{\tilde{p}_j \in s_i}
P(\tilde{p}_j \mid s_i)\,\log P(\tilde{p}_j \mid s_i),
\end{equation}

\begin{equation}
\label{eq:size-penalty}
\mathcal{H}(s_i)
\;=\;
\frac{\lvert \mathcal{V}_{s_i} \rvert}{\lvert \mathcal{V} \rvert}
\log \frac{\lvert \mathcal{V} \rvert}{\lvert \mathcal{V}_{s_i} \rvert}.
\end{equation}

The optimal partition solves
\begin{equation}
\label{eq:subpart}
\mathcal{S}^*
\;=\;
\arg\max_{\mathcal{S}}
\mathbf{IG}(\mathcal{S})
\quad
\text{s.t.}\quad
\forall s_i \in \mathcal{S}:\; \lvert \mathcal{V}_{s_i} \rvert \le \eta_{\max}.
\end{equation}


With Eq.~\eqref{eq:ig} favors subgraphs that concentrate the observed
reasoning fragments \(\tilde{\mathcal{P}}\) (low conditional entropy) while discouraging oversized regions (size penalty), paths frequently traversed by questions with similar semantic or structural types
are gathered into the same subgraph, subject to a size cap \(\eta_{\max}\). A greedy merge maximizes the marginal gain in \(\mathbf{IG}\). Finally very small subgraphs are finally merged to satisfy a minimum size.

\begin{algorithm}[H]
\caption{KG Partitioning via Greedy Information-Gain Merging}
\label{alg:alg_gp}
\begin{algorithmic}[1]
\Require Graph \(\mathcal{G}=(\mathcal{V},\mathcal{E})\); sliced paths \(\tilde{\mathcal{P}}\);
         merge threshold \(\theta\); max iterations \(T_{\max}\);
         minimum subgraph size \(\tau_{\min}\)
\Ensure Final subgraphs \(\hat{\mathcal{D}}=\{\hat{D}_1,\ldots,\hat{D}_K\}\)

\State Initialize \(\mathcal{S}^{(0)} \leftarrow\) connected seeds induced by \(\tilde{\mathcal{P}}\)
\For{$t \gets 1$ to \(T_{\max}\)}
    \State For all unordered pairs \((s_a,s_b) \subset \mathcal{S}^{(t-1)}\), compute
           \(\Delta_{ab} \leftarrow \mathbf{IG}(s_a \cup s_b) - \mathbf{IG}(s_a) - \mathbf{IG}(s_b)\)
    \State \((a^*,b^*) \leftarrow \arg\max_{(a,b)} \Delta_{ab}\)
    \If{\(\Delta_{a^*b^*} > \theta\) \textbf{and} \(\lvert \mathcal{V}_{s_{a^*}\cup s_{b^*}} \rvert \le \eta_{\max}\)}
        \State \(\mathcal{S}^{(t)} \leftarrow \mathcal{S}^{(t-1)} \setminus \{s_{a^*}, s_{b^*}\} \cup \{s_{a^*}\!\cup\! s_{b^*}\}\)
    \Else
        \State \textbf{break}
    \EndIf
\EndFor
\State \(\hat{\mathcal{D}} \leftarrow \{\, s \in \mathcal{S}^{(t)} \mid \lvert \mathcal{V}_{s} \rvert \ge \tau_{\min} \,\}\) \; (merge residuals into nearest neighbors if needed)
\end{algorithmic}
\end{algorithm}

\subsection{Subgraph--Agent--Question Matching}
\label{text:3-2}

Multiple lightweight agents are instantiated to increase query-time efficiency.
Let \(\hat{\mathcal{D}}=\{\hat{D}_1,\ldots,\hat{D}_n\}\) be the partitioned subgraphs,
the training questions \(\mathcal{Q}_{\text{train}}=\{q_1,\ldots,q_m\}\),
and the agent groups \(\{\mathcal{G}_1,\ldots,\mathcal{G}_k\}\) with
\(\mathcal{G}_\ell \subseteq \hat{\mathcal{D}}\).
The objective is to assign subgraphs to agents such that
(i) questions touch as few agents as possible,
(ii) each agent's capacity is respected, and
(iii) subgraphs grouped under the same agent remain semantically coherent.

With \(\textsc{Paths}(q_i)\) collects the recorded reasoning paths for \(q_i\) (\S\ref{text:3-1}). The association matrix \(\mathbf{A}\in[0,1]^{m\times n}\) and binary \emph{coverage set} \(\mathcal{C}_i\) for \(q_i\) are defined as
\begin{equation}
\small
A_{ij}
\;=\;
\frac{\bigl|\{\,p \in \textsc{Paths}(q_i):\, p \subseteq \hat{D}_j\,\}\bigr|}
{\bigl|\textsc{Paths}(q_i)\bigr|},
\qquad
1\le i\le m,\; 1\le j\le n,
\label{eq:assoc}
\end{equation}

\begin{equation}
\small
\mathcal{C}_i \;=\; \{\, \hat{D}_j \in \hat{\mathcal{D}} \; \mid \; A_{ij}>0 \,\}.
\label{eq:coverage}
\end{equation}

\paragraph{Matching objective.}
Given an assignment \(\{\mathcal{G}_\ell\}_{\ell=1}^k\), the number of agents
touched by question \(q_i\) is
$
\tau_i(\{\mathcal{G}_\ell\})
\;=\;
\bigl|\, \{\, \ell \;:\; \mathcal{G}_\ell \cap \mathcal{C}_i \neq \emptyset \,\} \,\bigr|.
$

The total number of agent touches across questions are calculated as the coordination cost:
$
\mathcal{L}_{\text{coord}}(\{\mathcal{G}_\ell\})
\;=\;
\sum_{i=1}^{m} \tau_i(\{\mathcal{G}_\ell\}).
$ The matching solves
$
\min_{\{\mathcal{G}_\ell\}}
\;\mathcal{L}_{\text{coord}}(\{\mathcal{G}_\ell\})
\quad
\text{s.t.}
$

\begin{equation}
\underbrace{\bigcup_{\ell=1}^{k}\mathcal{G}_\ell=\hat{\mathcal{D}}}_{\text{coverage}},
\;
\underbrace{|\mathcal{G}_\ell|\le N_{\max},\;\forall\ell}_{\text{capacity}},
\;
\underbrace{\frac{1}{|\mathcal{G}_\ell|}\!\sum_{\hat{D}_j\in\mathcal{G}_\ell}
\text{Sim}(\hat{D}_j,\mu_\ell)\ge \theta_{\text{coh}},\;\forall\ell}_{\text{coherence}}.
\label{eq:matching}
\end{equation}
Here \(\mu_\ell\) denotes the centroid of \(\mathcal{G}_\ell\) in a subgraph embedding
space (e.g., type distribution or structural embedding), and
\(\text{Sim}(\cdot,\cdot)\) is a bounded similarity measure.
The capacity \(N_{\max}\) is the maximum number of subgraphs per agent. \(\mathcal{L}_{\text{coord}}\) penalizes cases where a question depends on many subgraphs spread across different agents. The constraints in \eqref{eq:matching} ensure full coverage, bounded agent load,
and within-agent semantic consistency. In practice, \(\mathcal{L}_{\text{coord}}\) is tightly related to the co-occurrence structure of the coverage sets \(\{\mathcal{C}_i\}\): \textbf{subgraphs frequently used together should be assigned to the same agent}, subject to the capacity bound.

\paragraph{Properties} With a greedy procedure constructs agent groups guided by coverage co-occurrence and semantic coherence.
The inner ranking over a candidate set is \(O(N_{\max}\log N_{\max})\) per group; density maintenance over \(\{\mathcal{C}_i\}\) can be implemented with cached overlaps. The final mapping \(\hat{D}_j \mapsto \mathcal{G}_\ell\) and question-to-agent
touches \(\{\tau_i\}\) are persisted for the retrieval planner in \S\ref{text:3-3}.

\subsection{Retrieval Plans Decision for Incoming Questions} \label{text:3-3}

Given a new question \(q_{\text{new}}\), three aligned views are formed as in \S\ref{text:3-1}:
semantic context \(\mathbf{Q}_{s,\text{new}}\), entity-type context \(\mathbf{Q}_{e,\text{new}}\), and a (predicted or heuristic) path hint \(\mathbf{p}_{q_{\text{new}}}\).
The planner selects a small set of agents to query while maintaining high answer coverage. Let \(\hat{\mathcal{D}}=\{\hat{D}_1,\ldots,\hat{D}_n\}\) be the subgraphs and
\(\{\mathcal{G}_1,\ldots,\mathcal{G}_k\}\) the agent groups from \S\ref{text:3-2}. Let \(\text{Emb}(\cdot)\) denote the encoder over \(\mathbf{Q}_e\).
$\text{Sim}(q_{\text{new}}, q_i)$ is calculated with weights \(\gamma_s,\gamma_p \ge 0\).:
\begin{equation}
\label{eq:sim-score}
\gamma_s \cdot \cos\!\bigl(\text{Emb}(\mathbf{Q}_{e,\text{new}}), \text{Emb}(\mathbf{Q}_{e,i})\bigr)
\;+\;
\gamma_p \cdot \text{PathOverlap}(q_{\text{new}}, q_i),
\end{equation}

\begin{algorithm}[H]
\caption{Question-Centric Agent--Subgraph Allocation}
\label{alg:alg_as}
\begin{algorithmic}[1]
\Require Association matrix \(\mathbf{A}\); capacity \(N_{\max}\);
         coherence threshold \(\theta_{\text{coh}}\)
\Ensure Agent groups \(\{\mathcal{G}_1,\ldots,\mathcal{G}_k\}\)

\State Build coverage sets \(\mathcal{C}_i\) via \eqref{eq:coverage}
\State Initialize priority queue \(\mathcal{PQ}\) over \(\{\mathcal{C}_i\}\) with density
\[
\rho(\mathcal{C}_i)
\;=\;
\frac{\sum_{j=1}^{m}\bigl|\mathcal{C}_i \cap \mathcal{C}_j\bigr|}{\sqrt{|\mathcal{C}_i|}}
\]
\State Initialize \(\mathcal{U}\leftarrow \hat{\mathcal{D}}\) (unassigned subgraphs), \(k\leftarrow 0\)
\While{\(\mathcal{PQ}\neq\emptyset\) and \(\mathcal{U}\neq\emptyset\)}
    \State \(\mathcal{C}^\star \leftarrow \arg\max_{\mathcal{C}\in\mathcal{PQ}} \rho(\mathcal{C})\)
    \State \(\mathcal{G}_{\text{cand}} \leftarrow \{\hat{D}_j \in \mathcal{C}^\star \cap \mathcal{U}\}\)
    \If{\(|\mathcal{G}_{\text{cand}}| > N_{\max}\)}
        \State Rank \(\hat{D}_j \in \mathcal{G}_{\text{cand}}\) by frequency
        \(f(\hat{D}_j)=\sum_{i=1}^{m}\mathbb{I}[\hat{D}_j\in\mathcal{C}_i]\)
        \State Trim \(\mathcal{G}_{\text{cand}} \leftarrow\) Top-\(N_{\max}\) by \(f(\hat{D}_j)\)
    \EndIf
    \If{\(\text{Coherence}(\mathcal{G}_{\text{cand}})\ge \theta_{\text{coh}}\)}
        \State \(k\leftarrow k+1\); \(\mathcal{G}_k \leftarrow \mathcal{G}_{\text{cand}}\); \(\mathcal{U}\leftarrow \mathcal{U}\setminus \mathcal{G}_k\)
        \State Remove from \(\mathcal{PQ}\) those \(\mathcal{C}_i\) with \(\mathcal{C}_i\cap \mathcal{G}_k \neq \emptyset\), and reheapify
    \Else
        \State Demote \(\mathcal{C}^\star\) in \(\mathcal{PQ}\) (or adjust \(\theta_{\text{coh}}\))
    \EndIf
\EndWhile
\For{each \(\hat{D}_j \in \mathcal{U}\)} \Comment{fallback for residuals}
    \State Assign to nearest group
    \(
    \ell^\star=\arg\max_{\ell} \text{Sim}(\hat{D}_j,\mu_\ell)
    \)
    subject to \(|\mathcal{G}_{\ell^\star}|<N_{\max}\)
\EndFor
\end{algorithmic}
\end{algorithm}

Let \(\mathcal{N}_\theta(q_{\text{new}}) = \{q_i \in \mathcal{Q}_{\text{train}} \mid \text{Sim}(q_{\text{new}},q_i)\ge \theta_{\text{sim}}\}\).
With \(\mathcal{A}(q_i)\) the agent set recorded for \(q_i\) during training-time matching, the \textbf{similarity-seeded agent} set is

\begin{equation}
\label{eq:asim}
\mathcal{A}_{\text{sim}}(q_{\text{new}})
\;=\;
\bigcup_{q_i \in \mathcal{N}_\theta(q_{\text{new}})} \mathcal{A}(q_i),
\end{equation}

Using the slicing operator \(\phi(\cdot)\) from \S\ref{text:3-1}, form
\(\tilde{\mathcal{P}}_{\text{new}}=\phi(\mathbf{p}_{q_{\text{new}}})\) (at-most-2-hop fragments).
For a fragment \(\tilde{p}\), let
\(\mathcal{J}(\tilde{p})=\{j \mid \tilde{p}\subseteq \hat{D}_j\}\) and
\(\mathcal{L}(j)=\{\ell \mid \hat{D}_j\in\mathcal{G}_\ell\}\).
Then \textbf{Path-seeded agents} are

\begin{equation}
\small
\label{eq:apath}
\mathcal{A}_{\text{path}}(q_{\text{new}})
\;=\;
\bigcup_{\tilde{p}\in \tilde{\mathcal{P}}_{\text{new}}}
\bigcup_{j \in \mathcal{J}(\tilde{p})}
\mathcal{L}(j).
\end{equation}

\paragraph{Agent selection.}
The optimal agent set $\mathcal{A}^* \subseteq\mathcal{A}$ is
\begin{equation}
\small
\label{eq:agent-select}
\mathcal{A}^*
\;=\;
\begin{cases}
\mathcal{A}_{\text{sim}}(q_{\text{new}}), & \text{if } \text{Sim}(q_{\text{new}},q_i)\ge \theta_{\text{direct}},\\
\mathcal{A}_{\text{sim}}(q_{\text{new}})\,\cup\,\mathcal{A}_{\text{path}}(q_{\text{new}}), & \text{otherwise.}
\end{cases}
\end{equation}


\paragraph{Question decomposition.}
Decomposition attaches the responsible agent set to each sub-task:
\begin{equation}
\small
\label{eq:decomposition}
\Psi(q_{\text{new}})
=
\begin{cases}
\Psi_{\text{sim}}(q_{\text{new}})
= \{ (s_1,\mathcal{A}_1),\ldots,(s_u,\mathcal{A}_u)\},
& \text{if } \exists \mathcal{N}_\theta(q_{\text{new}}),\\[2pt]
\Psi_{\text{path}}(q_{\text{new}})
= \{ (\tilde{p}_1,\mathcal{A}'_1),\ldots,(\tilde{p}_v,\mathcal{A}'_v)\},
& \text{otherwise,}
\end{cases}
\end{equation}
where \(\Psi_{\text{sim}}\) transfers decomposition patterns from nearest neighbors,
and \(\Psi_{\text{path}}\) groups path fragments by their serving agents via \eqref{eq:apath}.
The resulting agent–task pairs feed the retriever \(\psi(q_{\text{new}},\hat{\mathcal{K}})\) for subgraph-specific retrieval.

\begin{algorithm}[H]
\caption{Test-Question Decomposition and Agent Routing}
\label{alg:ag_tq_route}
\begin{algorithmic}[1]
\Require New question \(q_{\text{new}}\);
         registry \(\mathcal{R}=\{\mathcal{A},\hat{\mathcal{D}},\mathcal{M}\}\)
         \Comment{\(\mathcal{M}\): stored mappings from training}
\Ensure Mapping \(\mathcal{MAP}=\{(a_i, t_i)\}\) of agents to sub-tasks

\State \textbf{Phase 1: Similarity-guided transfer}
\State Encode \(\mathbf{Q}_{e,\text{new}}\) and compute \(\text{Sim}(q_{\text{new}},q_i)\) via \eqref{eq:sim-score}
\State \(\mathcal{N}_\theta(q_{\text{new}})\leftarrow \{q_i:\text{Sim}\ge \theta_{\text{sim}}\}\); form \(\mathcal{A}_{\text{sim}}\) by \eqref{eq:asim}
\If{\(\mathcal{N}_\theta(q_{\text{new}})\neq\emptyset\)}
    \State Extract nearest-neighbor pattern \(\mathcal{P}=\{(s_j,\mathcal{A}_j,\hat{D}_j)\}\) from \(\mathcal{M}\)
    \For{each \((s_j,\mathcal{A}_j,\hat{D}_j)\in\mathcal{P}\)}
        \State \(t'_j \leftarrow \textsc{AlignSubproblem}(s_j, q_{\text{new}})\)
        \State \textbf{if} \(\text{Cover}(\mathcal{A}_j,\hat{D}_j)\ge \theta_{\text{align}}\) \textbf{then} add \((\mathcal{A}_j,t'_j)\) to \(\mathcal{MAP}\)
    \EndFor
\EndIf

\State \textbf{Phase 2: Path-driven adaptation (fallback or refinement)}
\State Slice \(\tilde{\mathcal{P}}_{\text{new}}=\phi(\mathbf{p}_{q_{\text{new}}})\); build \(\mathcal{A}_{\text{path}}\) via \eqref{eq:apath}
\State \(\mathcal{A}_{\text{seed}}\) by \eqref{eq:agent-select}; pick \(\mathcal{A}^*\) by \eqref{eq:agent-select}
\For{each \(\tilde{p}\in \tilde{\mathcal{P}}_{\text{new}}\)}
    \State Select \(a^\star(\tilde{p})=\arg\max_{a\in \mathcal{A}^*}\text{Cover}(a,\tilde{p})\cdot \text{Conf}(a\mid \hat{\mathcal{D}})/\text{Load}(a)\)
    \State Add \((a^\star(\tilde{p}), \tilde{p})\) to \(\mathcal{MAP}\)
\EndFor

\State \Return \(\mathcal{MAP}\)
\end{algorithmic}
\end{algorithm}

\subsection{Multi-Agent RAG}
\label{text:3-4}

\paragraph{Parallel subgraph retrieval.}
Given the routed agent--task pairs \(\mathcal{MAP}=\{(a_i, t_i)\}\) from \S\ref{text:3-3},
each agent \(a_i \in \mathcal{A}^*\) operates over its assigned subgraph set
\(\mathcal{G}_{a_i}\subseteq \hat{\mathcal{D}}\) in parallel.
For a sub-task \(t\), let \(\textsc{Ent}(t)\) denote the set of entity mentions
identified from the entity-type view, and for a KG path \(p\), let \(\textsc{Ent}(p)\) be the
entities on \(p\). The path–task matching score is
\begin{equation}
\label{eq:match}
\text{Match}(p,t) \;=\; \frac{\lvert \textsc{Ent}(p)\cap \textsc{Ent}(t)\rvert}{\max(1,\lvert \textsc{Ent}(t)\rvert)} \in [0,1].
\end{equation}

Let \(\theta_{\text{match}}\in(0,1]\) be a threshold. The routine returns both structured evidence (\(\mathcal{TRI}_i\)) and textual evidence (\(\mathcal{ET}_i\)). The operator \(\textsc{Textualize}(\cdot)\) applies relation-specific templates to produce natural-language assertions.

\begin{algorithm}[H]
\caption{Parallel Subgraph Retrieval (per agent \(a_i\))}
\label{app:alg_marag}
\begin{algorithmic}[1]
\Require Sub-task \(t_i\); agent \(a_i\); agent subgraphs \(\mathcal{G}_{a_i}\); retriever \(\psi\)
\Ensure Triplets \(\mathcal{TRI}_i\); evidence text \(\mathcal{ET}_i\)

\State \textbf{Graph traversal:}
\[
\mathcal{P}_i \leftarrow \bigl\{\, p \in \textsc{Paths}(\mathcal{G}_{a_i}) \; \big| \; \text{Match}(p,t_i)\ge \theta_{\text{match}} \,\bigr\}
\]

\State \textbf{Triplet retrieval:}
\[
\mathcal{TRI}_i \leftarrow \bigcup_{p \in \mathcal{P}_i} \psi(p), 
\qquad \psi(p)=\{(e_s,r,e_o)\in p\}
\]

\State \textbf{Triplet-to-text evidence:}
\[
\mathcal{ET}_i \leftarrow \text{LLM}_{\text{sum}}\!\left(\bigcup_{(e_s,r,e_o)\in \mathcal{TRI}_i} \textsc{Textualize}(e_s,r,e_o)\right)
\]

\State \Return \((\mathcal{TRI}_i, \mathcal{ET}_i)\)
\end{algorithmic}
\end{algorithm}

\subsection{Final Answer Generation}
\label{text:3-5}

After collecting per-agent outputs, form knowledge aggregation
$
\mathcal{M}
\;=\;
\bigcup_{a_i \in \mathcal{A}^*} \bigl\{\,(\mathcal{TRI}_i,\mathcal{ET}_i, \text{Conf}(a_i))\,\bigr\},
$ triplet set
$
\mathcal{TRI}_{\text{all}}=\bigcup_i \mathcal{TRI}_i,
$ and evidence set
$
\mathcal{ET}_{\text{all}}=\bigcup_i \mathcal{ET}_i,
$
where \(\text{Conf}(a_i)\in[0,1]\) is the agent reliability estimated from training usage.
Each triplet \(\tau=(e_s,r,e_o)\in \mathcal{TRI}_{\text{all}}\) receives a weight  $\text{Score}(\tau)$:
\begin{equation}
\small
\label{eq:triplet-score}
\alpha \cdot \underbrace{\max_{i:\,\tau\in \mathcal{TRI}_i}\text{Conf}(a_i)}_{\text{agent reliability}}
\;+\;
\beta \cdot \underbrace{\text{Freq}(\tau)}_{\text{multi-path support}}
\;+\;
\gamma \cdot \underbrace{\text{Entail}(\mathcal{ET}_{\text{all}}\!\Rightarrow\!\tau)}_{\text{textual support}},
\end{equation}

with \(\alpha,\beta,\gamma \ge 0\), \(\text{Freq}(\tau)\) the normalized frequency of \(\tau\) across matched paths,
and \(\text{Entail}(\cdot)\in[0,1]\) an entailment confidence computed by a lightweight verifier.

With a predicate-level contradiction test \(\text{Conflict}(\tau_1,\tau_2)\in\{0,1\}\):
\begin{equation}
\label{eq:conflict}
\text{Conflict}(\tau_1,\tau_2)
\;=\;
\begin{cases}
1, & \text{if } \tau_1 \vdash \neg \tau_2 \;\text{ or }\; \tau_2 \vdash \neg \tau_1,\\
0, & \text{otherwise.}
\end{cases}
\end{equation}

Triplets without contradiction forms a \emph{compatibility graph} \(G_{\text{comp}}=(V,E)\) where
\(V=\mathcal{TRI}_{\text{all}}\) and
\((\tau_u,\tau_v)\in E \iff \text{Conflict}(\tau_u,\tau_v)=0\).
Let \(w(\tau)=\text{Score}(\tau)\).
The selected clean set \(\mathcal{T}_{\text{clean}}\) solves the maximum-weight clique problem on \(G_{\text{comp}}\) ensuring
\[
\forall \tau_1,\tau_2 \in \mathcal{T}_{\text{clean}}:\; \text{Conflict}(\tau_1,\tau_2)=0.
\]

With only mutually compatible facts retained, final answer is produced by a head agent $
\textsc{Answer}
=
\text{LLM}_{\text{head}}\!\left(q_{\text{new}},\, \mathcal{T}_{\text{clean}},\, \mathcal{ET}_{\text{all}}\right).
$










\subsection{Information-Preserving Partitioning and Computational Efficiency} \label{text:3-6}

Proven in Appendix~\ref{app:pr_eff}, Theorem~\ref{thm:info-preserve}, the subgraphs ensure there is no lose in information comparing with using whole graph based on $\mathbf{IG}$ calculation. Also, Theorem~\ref{thm:info-interpretably} proves the matching between $\mathcal{A}_i-\mathcal{G}_i-\mathcal{Q}$ satisfies mutual information match. The time effectiveness of applying SPLIT-RAG on KG is proved in Theorem~\ref{thm:complexity}, based on the prominent search space reduction. Let $N$ be the total entities in KG and $k$ the average subgraph size. Comparing with using single KG, SPLIT-RAG achieves: $T_{\text{retrieve}} = O\left(\frac{N}{k} \log k\right) \quad \text{vs} \quad T_{\text{base}} = O(N)$, Given $m = \lceil N/k \rceil$ subgraphs, each requires $O(\log k)$ search via B-tree indexing. The more general the dataset, the higher search improvement SPLIT-RAG can achieve.


\section{Experiment}

\subsection{Experiment Setup}

Four widely used KGQA benchmarks are used for experiments: 1) WebQuestionsSP(WebQSP)~\cite{yih2016value}, which contains full semantic parses in SPARQL queries answerable using Freebase.; 2) Complex WebQuestions(CWQ)~\cite{talmor2018web}, which takes SPARQL queries from WebQSP and automatically create more complex queries; 3-4) MetaQA-2-Hop and MetaQA-3-Hop~\cite{zhang2018variational}, consisting of a movie ontology derived from the WikiMovies Dataset and three sets of question-answer pairs written in natural language, ranging from 1-3 hops. 1-hop questions in MetaQA dataset are not used in experiments since the queries are too basic. The detailed dataset statistics are presented in table~\ref{table:datasets}. Three types of baselines are included in the experiments, details are described in Appendix~\ref{app:baseline}.

\begin{itemize}
    \item \textbf{1) Embedding method}, including KV-Mem~\cite{miller2016key}, GraftNet~\cite{sun2018open}, PullNet~\cite{sun2019pullnet}, EmbedKGQA~\cite{saxena2020improving}, TransferNet~\cite{shi2021transfernet}; 

    \item \textbf{2) LLM output}, used models includes Llama3-8b, Davinci-003, ChatGPT, Gemini 2.0 Flash; Gemini 2.0 Flash-Lite and Gemini 2.5 Flash Preview 04-17 is also used in Section~\ref{section:diffAgent} for judging agents' importance in our framework.

\item \textbf{3) KG+LLM method}, including StructGPT~\cite{jiang2023structgpt}, Mindful-RAG~\cite{agrawal2024mindful}, RoG~\cite{luo2023reasoning},  SubgraphRAG~\cite{li2024simple}, ToG~\cite{sun2023think}, GcR~\cite{luo2024graph} and standard graph-based RAG. Our SPLIT-RAG also falls into this category. 
\end{itemize}

Hit, Hits@1 (H@1), and F1 metrics are used for evaluation. Hit measures whether there is at least one gold entity returned, especially useful in LLM-style recall. Hits@1 (H@1) measures exact-match accuracy of the top prediction. Finally F1 achieves a span-level harmonic mean over predicted answers vs. true answers.

\begin{table}[h]
  \centering 
  \caption{Dataset statistics}
  \small
  \begin{tabular}{lccc}
    \toprule
    \textbf{Dataset} & \textbf{\#Train} & \textbf{\#Test} & \textbf{Max Hop} \\
    \midrule
    WebQSP       & 2,826   & 1,628    & 2 \\
    CWQ          & 27,639  & 3,531    & 4 \\
    MetaQA-2hop  & 119,986 & 114,196  & 2 \\
    MetaQA-3hop  & 17,482  & 14,274   & 3 \\
    \bottomrule
  \end{tabular}
  \label{table:datasets}
\end{table}

To further analyze the latency, \textbf{SPLIT-RAG} is compared with RoG~\cite{luo2023reasoning} and ToG~\cite{sun2023think} on both WebQSP and CWQ benchmarks. End-to-end latency per QA is calculated under the unified budget.

\subsection{Experiment Result: How competitive is SPLIT-RAG with other baselines?}

\begin{table*}[h]
\centering
\caption{Performance comparison of different methods on KGQA benchmarks. The \textbf{best} and \underline{second-best} methods are denoted.}
\label{table:baseline-results}
\begin{adjustbox}{max width=0.9\textwidth}
\begin{tabular}{l|ccc|ccc|cc|cc}
\toprule
\multicolumn{11}{c}{\textbf{Overall Results}} \\ \cline{1-11}
\multirow{2}{*}{Method} & \multicolumn{3}{c|}{WebQSP} & \multicolumn{3}{c|}{CWQ} & \multicolumn{2}{c|}{MetaQA-2Hop} & \multicolumn{2}{c}{MetaQA-3Hop} \\
\cmidrule(lr){2-4} \cmidrule(lr){5-7} \cmidrule(lr){8-9} \cmidrule(lr){10-11}
& Hit & H@1 & F1 & Hit & H@1 & F1 & Hit & H@1 & Hit & H@1 \\
\midrule

\rowcolor{black!10}\multicolumn{11}{l}{\textbf{Embedding methods}} \\

KV-Mem~\cite{miller2016key}
& \NA & 46.7 & \NA
& \NA & 21.1 & \NA
& \NA & 82.7
& \NA & 48.9 \\

GraftNet~\cite{sun2018open}
& \NA & 66.4 & \NA
& \NA & 32.8 & \NA
& \NA & 94.8
& \NA & 77.1 \\

PullNet~\cite{sun2019pullnet}
& \NA & 68.1 & \NA
& \NA & 47.2 & \NA
& \NA & \underline{99.9}
& \NA & 91.4 \\

EmbedKGQA~\cite{saxena2020improving}
& \NA & 66.6 & \NA
& \NA & 45.9 & \NA
& \NA & 98.8
& \NA & \underline{94.8} \\

TransferNet~\cite{shi2021transfernet}
& \NA & 71.4 & \NA
& \NA & 48.6 & \NA
& \NA & \textbf{100}
& \NA & \textbf{100} \\

\midrule
\rowcolor{black!10}\multicolumn{11}{l}{\textbf{LLM methods (no KG retrieval)}} \\

Llama3-8b
& 59.2 & \NA & \NA
& 33.1 & \NA & \NA
& 31.7 & \NA
& 28.9 & \NA \\

Davinci-003
& 48.3 & \NA & \NA
& 35.6 & \NA & \NA
& 42.5 & \NA
& 25.3 & \NA \\

ChatGPT
& 66.8 & \NA & \NA
& 39.9 & \NA & \NA
& 43.2 & \NA
& 31.0 & \NA \\

Gemini 2.0 Flash
& 65.3 & \NA & \NA
& 41.1 & \NA & \NA
& 46.9 & \NA
& 32.2 & \NA \\

Falcon-H1-7B-Instruct
& 67.4 & \NA & \NA
& 45.8 & \NA & \NA
& 48.1 & \NA
& 33.5 & \NA \\

\midrule
\rowcolor{black!10}\multicolumn{11}{l}{\textbf{KG+LLM methods}} \\

StructGPT~\cite{jiang2023structgpt}
& \NA & 72.6 & \NA
& \NA & \NA & \NA
& \NA & \textbf{97.3}
& \NA & \underline{87.0} \\

Mindful-RAG~\cite{agrawal2024mindful}
& \NA & \underline{84.0} & \NA
& \NA & \NA & \NA
& \NA & \NA
& \NA & 82.0 \\

Graph-RAG
& 77.2 & 73.1 & 67.7
& 58.8 & 54.6 & 53.9
& 88.1 & 85.4
& 81.3 & 78.2 \\

RoG~\cite{luo2023reasoning}
& 85.7 & 80.0 & 70.8
& 62.6 & 57.8 & 56.2
& 93.8 & 90.1
& 87.9 & 84.8 \\

SubgraphRAG~\cite{li2024simple}
& \underline{86.9} & \underline{81.8} & \underline{71.5}
& \underline{65.4} & \underline{61.3} & \underline{57.8}
& \underline{95.2} & 92.3
& 89.3 & 86.1 \\

ToG~\cite{sun2023think}
& 82.1 & 78.5 & 69.8
& 63.3 & 59.8 & 56.4
& 91.6 & 88.9
& 87.3 & 84.7 \\

GcR~\cite{luo2024graph}
& 84.5 & 79.7 & 68.9
& 61.2 & 57.4 & 55.1
& 93.5 & 91.3
& \underline{89.7} & 86.4 \\

\textbf{SPLIT-RAG} 
& \textbf{89.4} & \textbf{85.3} & \textbf{75.6}
& \textbf{66.1} & \textbf{63.0} & \textbf{60.7}
& \textbf{96.9} & \underline{94.5}
& \textbf{91.4} & \textbf{88.3} \\
\midrule

\end{tabular}
\end{adjustbox}
\end{table*}

SPLIT-RAG beats all baselines from in WebQSP and CWQ datasets, resulting in \textbf{hit rates} of \textbf{87.7} in WebQSP and \textbf{64.2} in CWQ. In MetaQA, SPLIT-RAG also outferforms comparing LLM and KG+LLM baselines, resulting in Hit rate of \textbf{97.6} in MetaQA-3Hop and 91.8 in MetaQA-3Hop. Standard Embedding Methods achieved higher accuracy on MetaQA dataset due to the limit size of the KG, but \textbf{it's not comparative} with existing KG+LLM methods since they rely too much on size and structure of graphs, having \textit{much lower accuracy} when applied on large knowledge base like free base.

\begin{table}[t]
  \centering
  \caption{End-to-end latency per QA (ms) on WebQSP/CWQ under the unified budgets in Table~\ref{table:agent-budgets-webcwq}.}
  \label{tab:latency_tog_rog_split}
  \vspace{-1mm}
  \begin{adjustbox}{max width=\columnwidth}
  \begin{tabular}{l|cc}
    \toprule
    Method & WebQSP Avg Time (ms) & CWQ Avg Time (ms) \\
    \midrule
    Think-on-Graph (ToG) & 41586.2 & 47623.0 \\
    Reasoning on Graphs (RoG) & 34735.4 & 41344.2 \\
    \textbf{SPLIT-RAG} & \textbf{31635.3} & \textbf{35902.2} \\
    \bottomrule
  \end{tabular}
  \end{adjustbox}
  \vspace{-2mm}
\end{table}


For \emph{latency}, SPLIT-RAG achieves markedly lower latency than ToG and RoG under identical budgets. For each end-to-end QA, SPLIT-RAG averages 31.6 s on WebQSP and 35.9 s on CWQ, yielding a 9.95 s reduction vs. ToG and 3.10 s vs. RoG on WebQSP; and a 11.72 s reduction vs. ToG and 5.44 s vs. RoG on CWQ (Table~\ref{tab:latency_tog_rog_split}). These savings are consistent with SPLIT-RAG’s subgraph-parallel retrieval and single-pass head synthesis.

\begin{table}[h]
  \centering 
  \caption{KG and Subgraph Size}
  \resizebox{\columnwidth}{!}{
  \begin{tabular}{lccccc}
    \toprule
    \textbf{Dataset} & \textbf{\#Entity} & \textbf{\#Relations} & \textbf{\#Triplets} & \textbf{\#Avg $\mathcal{G}$ Entity} & \textbf{ $\mathcal{G}$ Coverage}\\
    \midrule
    WebQSP       & 2,566,291    & 7,058 & 8,309,195 & 65802.3 & 91.3\\
    CWQ          & 2,566,291    & 7,058 & 8,309,195 & 38302.9 & 72.8\\
    MetaQA       & 43,234  & 9 & 133,582 & 8646.8 & 99.9\\
    \bottomrule
  \end{tabular}
  }
  \label{table:subgraphAttr}
\end{table}

\begin{table}[h]
  \centering 
  \caption{Ablation Study on WebQSP dataset. Details of A(retrieval plan generation), B(multi-agent usage), and C(conflict detection) are explained in~\ref{ablation}.}
  \resizebox{\columnwidth}{!}{
  \begin{tabular}{lccccc}
    \toprule
    \textbf{Method} & \textbf{Hit} & \textbf{H@1} & \textbf{F1} & \textbf{Avg\#$\{\mathcal{G}\}$} & \textbf{Avg$\{\mathcal{G}\}$size} \\
    \midrule
    SPLIT-RAG       & \textbf{89.4}   & \textbf{85.3}    & \textbf{75.6} & 4.6& 67,302.9\\ 
    SPLIT-RAG - A   & 82.1  &  80.7   & 67.3 & 6.3& 66,841.2\\
    SPLIT-RAG - B   & 71.7 &  67.9  & 54.5 & 1 & 303,071.1\\
    SPLIT-RAG - C   & \underline{85.1}  &  \underline{82.9} & \underline{70.9} & 4.6& 67,302.9  \\
    \bottomrule
  \end{tabular}
  }
  \label{table:ablation}
\end{table}

\begin{table*}[h]
  \centering
  \caption{Key Pareto configurations and results on MetaQA-3Hop (end-to-end per QA).}
  \label{table:diffAgents-core}
  \vspace{-2mm}
  \begin{adjustbox}{max width=\textwidth}
  \begin{tabular}{c|c|c|cc|c}
    \toprule
    Group & \(\mathcal{A}_i\) (subgraph agents) & \(\mathbf{A}_{\text{head}}\) & \textbf{Hit} & \textbf{H@1} & \textbf{Avg Time (s)} \\
    \midrule
    G3 & Gemini 2.0 Flash-Lite & Gemini 2.0 Flash & 90.4 & 87.3 & 20.2 \\
    G5 & Gemini 2.0 Flash-Lite & Gemini 2.5 Flash Preview 04-17 & 92.1 & 89.2 & 28.6 \\
    G6 & Gemini 2.0 Flash & Gemini 2.5 Flash Preview 04-17 & \textbf{93.9} & \textbf{90.7} & 37.1 \\
    \bottomrule
  \end{tabular}
  \end{adjustbox}
  \vspace{-2mm}
\end{table*}

The size of knowledge base influence the retrieval process. In WebQSP and CWQ dataset, from Table~\ref{table:subgraphAttr} there is prominent differences in $\mathcal{G}$ Coverage(whether or not the combination of used subgraphs are enough to cover all knowledge to answer questions) between subgraphs $\mathcal{G}$ built on freebase and that on MetaQA. With less relation numbers and enough training questions, the subgraphs of MetaQA can cover mostly incoming rest questions. While in WebQSP and CWQ experiments, combinations of larger $\mathcal{G}$ are still not enough to cover many of test questions.

\subsection{Ablation Study: The Value of Key Component} \label{ablation}

Ablations study are done to test the necessity of 1)Generating retrieval plan using training questions; 2)Using multi-agent with subgraphs; 3)Applying conflict detection on final triplet sets. Experiments are set for 3 stages comparing with initial results on WebQSP dataset: A)Use subgraphs labels other than finding question similarities to calculate the final useful $\{\mathcal{G}\}$; B) Other than retrieve knowledge separately, merging all useful $\mathcal{G}$ and use only one agent $\mathcal{A}_i$ to generate retrieve; C) Omit conflict detection, throwing all retrieved triplets to $\mathbf{A}_{head}$ to generate answers.

From Table~\ref{table:ablation}, without using $q_{train}$ similarities, the accuracy of retrieval plan for incoming questions will drop, adding some labels is not enough to precisely describe partitioned $\mathcal{G}$ (like the combination of entity \& relation types), even though it would need more subgraphs(and \textbf{waste agent calls}). Hit rate and F1 drop more seriously when not retrieve triplets separately. $\{\mathcal{G}\}$ may \textbf{be unmergable to one connected graph}, therefore many complex query are not executive at all when applying to questions -- thus the accuracy is more close to using LLM only to answer the questions. Also the usage of subgraph is time saving(which is also theoretical provable in Appendix~\ref{app:pr_eff}), while B) causes prominent \textbf{increase in search space}. Finally, without conflict detection, the accuracy and F1 didn't drop too much since only differences exist in few conflict triplets, which happens only in small \# questions. The ablation study demonstrates the necessity of applying $q_{train}$ supported type-specialized retrieval plan generation and Multi-Agent RAG, while adding conflict detection can also avoid redundancy and hallucinations: \textbf{SPLIT-RAG decreases the search space while using limit agent calls to get high accuracy}.

\subsection{Accuracy--Latency Tradeoff \& Model Flexibility: Is unifying all agents necessary?}
\label{section:diffAgent}

To verify the influence of different agents in SPLIT-RAG system, extra experiments are done by assigning different LLMs for \(\{\mathcal{A}_i\}\) (subgraph agents) and \(\mathbf{A}_{\text{head}}\) (head agent). Beside, all runs share the same planner, retriever, and evidence caps; only the LLM tiers assigned to \(\{\mathcal{A}_i\}\) and \(\mathbf{A}_{\text{head}}\) vary. Decoding and budget settings are fixed across groups (Table~\ref{table:agent-budgets-metaqa}) to ensure fairness. The same series of LLMs are used, with  (Lite$<$Flash$<$2.5-Preview).

\begin{table}[h]
  \centering
  \caption{Fixed decoding \& retrieval budgets on MetaQA 3-Hop).}
  \label{table:agent-budgets-metaqa}
  \vspace{-2mm}
  \begin{adjustbox}{max width=\columnwidth}
  \begin{tabular}{l|c|c}
    \toprule
    Setting & \(\{\mathcal{A}_i\}\) (subgraph agents) & \(\mathbf{A}_{\text{head}}\) \\
    \midrule
    Max input (tokens) per call & 3{,}000 & 4{,}096 \\
    Max new tokens & 96 & 256 \\
    Temperature / Top-$p$ & 0.2 / 0.9 & 0.3 / 0.95 \\
    Stop sequences & \texttt{\textbackslash n\textbackslash n}, KB-EOF & \texttt{\textbackslash n\textbackslash n}, Answer-EOF \\
    Tool-calls / function-calls & disabled & disabled \\
    Evidence cap to head (tokens) &  --- & 2{,}600 \,(\textit{rank by Eq.~\eqref{eq:triplet-score}}) \\
    Max subgraphs per question \(B\) & 3 & 3 \\
    Path depth / top-$K$ per subgraph & depth\(\le 2\), top-$K=5$ & --- \\
    Max triplets per agent & 400 & --- \\
    \bottomrule
  \end{tabular}
  \end{adjustbox}
  \vspace{-2mm}
\end{table}

\paragraph{Results.}
Accuracy and latency results for the key pareto configurations of agents are shown in Table~\ref{table:diffAgents-core}. Figure~\ref{fig:time-vs-hit} shows the accuracy--latency Pareto frontier; the frontier passes through
G3\(\rightarrow\)G5\(\rightarrow\)G6.
Replacing Flash with Flash-Lite for \(\{\mathcal{A}_i\}\) while retaining a strong \(\mathbf{A}_{\text{head}}\) (G6\(\rightarrow\)G5)
reduces Hit by \(2.0\%\) but saves \(\mathbf{8.5}\)s on average, indicating that heavy models are not required for all agents.
Conversely, assigning a weaker head with stronger \(\{\mathcal{A}_i\}\) (e.g., G2) degrades \(\text{H@1}\) substantially (see Appendix~\ref{app:diffAgents-full}).

Maintaining a strong \(\mathbf{A}_{\text{head}}\) yields a better accuracy--latency tradeoff than uniformly upgrading
all \(\{\mathcal{A}_i\}\).
Given a fixed budget, prioritizing the head agent allows subgraph agents to use a lighter tier without significant loss:
moving from G6 to G5 reduces Hit by \(2.0\%\) while lowering latency by \(\approx 8\)s per QA, demonstrating that SPLIT-RAG
\emph{does not require advanced agents everywhere}.
Full group-wise results and degradation analysis are deferred to Appendix~\ref{app:diffAgents-full}, where the performance degradation relative to the best H@1 is drawn on figure~\ref{fig:degradation}. Darker color indicates larger degradation.

\begin{figure}[t]
  \centering
  \includegraphics[width=0.86\columnwidth]{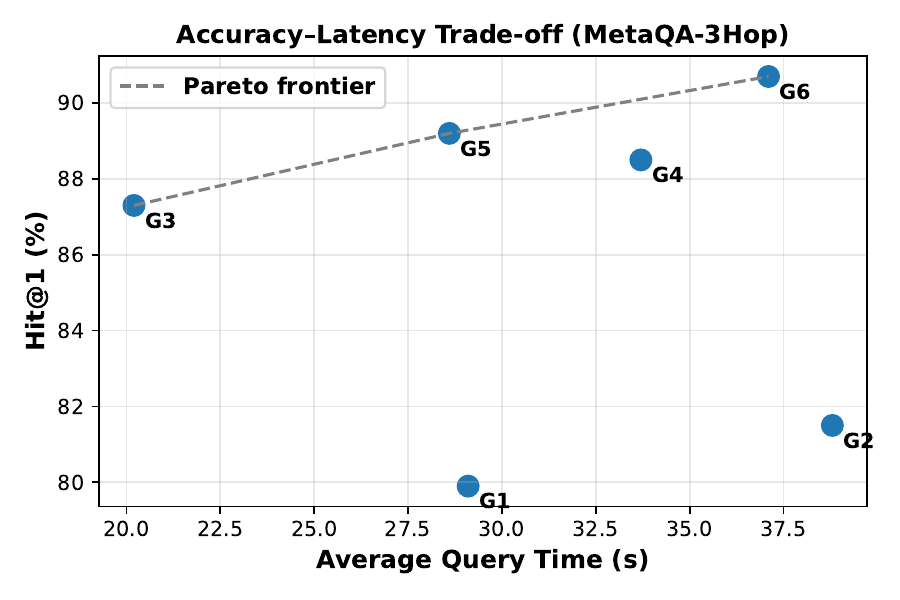}
  \vspace{-2mm}
  \caption{Time--Hit comparison with Pareto frontier (frontier around G3--G5--G6).}
  \label{fig:time-vs-hit}
  \vspace{-2mm}
\end{figure}

\section{Conclusion and Future Work}

In this paper, a new RAG framework is proposed, i.e., \textbf{S}emantic \textbf{P}artitioning of \textbf{L}inked \textbf{I}nformation for \textbf{T}ype-Specialized \textbf{Multi-Agent RAG(SPLIT-RAG)} that contains 1) A novel mechanism that does graph partitioning based on training QA, while decomposing incoming questions using subgraphs; 2) A efficient multi-agent RAG framework that reduce searching space and accelerate retrieval; 3) Conflict detection during knowledge aggregation to reduce hallucination facing redundant or false knowledge. Based on extensive experiments, SPLIT-RAG achieves state-of-the-art performance on three KGQA benchmark (WebQSP, CWQ and MetaQA), and ablation study verifies the necessity of all key module. Efficiency tests further show that lightweight subgraph agents plus a strong head agent yield near-peak accuracy at lower cost. Future work includes (1) Extend the partitioner to streaming graphs via online rebalancing and (2) Develop stronger verification schemes for rare-entity conflicts challengeing current logical checks.

\section{Acknowledgements}

This research is partially support by Technology Innovation Institure Abu Dhabi, UAE. Also, this research includes computations using the computational cluster Wolfpack supported by School of Computer Science and Engineering at UNSW Sydney.


\bibliographystyle{ACM-Reference-Format}
\balance
\bibliography{base}

\appendix

\section{Notations} \label{app:notation}

Important notations are listed in Table~\ref{table:notation}.

\begin{table*}[h]
  \centering
  \caption{Notations Tables in SPLIT-RAG}
  \begin{adjustbox}{max width=\textwidth}
    \begin{tabular}{c|l}
      \toprule
      \textbf{Notation} & \textbf{Definition} \\ \midrule\midrule
      $F$ & Abbreviation of the SPLIT-RAG framework. \\ \midrule
      $\hat{\mathcal{G}}$ & Multi-Agent Generator. \\ \midrule
      $R$ & Retrieval Module, return data from initial database. $D$ \\ \midrule
      $\tau$ & Data Indexer, receiving training Questions $\mathcal{Q}_{train}$ and essential data. \\ \midrule
      $\psi$ & Data Retriever, searching in knowledge base and return matching results. \\ \midrule
      $\hat{K}$ & Preprocessed knowledge base based on $D$, containing question type information from $\mathcal{Q}_{train}$.\\ \midrule
      $L$ & Labels matching $\mathcal{Q}_{train}$, containing semantic information behind questions.\\ \midrule
      $\mathbf{Q}_s$ & Semantic context of one question, with entities marked. \\ \midrule
      $\mathbf{Q}_e$ & Entities in $\mathbf{Q}_s$ are replaces by entity types\\ \midrule
      $\mathbf{p}$ & Path information for answering one question. \\ \midrule
      $\mathcal{P}$ & Paths in KG matching how answers retrieved for training questions.\\ \midrule
      $\mathbf{IG}$ & Information gain used to control sugbraph size, details in Equation~\ref{eq:subpart}.\\ \midrule
      $\hat{\mathcal{D}_i}$ & \textbf{Initial} generated \textit{smaller} subgraphs set based on $\mathcal{Q}_{train}$ and $\mathbf{IG}$ control.\\ \midrule
      $\hat{\mathcal{C}_i}$ & Coverage set for questions to reflect subgraphs' mutual overlap.\\ \midrule
      $\hat{\mathcal{G}_i}$ & \textbf{Final} merged \textit{larger} subgraphs from $\hat{\mathcal{D}_i}$ satisfying constraint explained in~\ref{text:3-2} .\\ \midrule
      $\mathcal{A}_i$ & Agents controlling each subgraph $\hat{\mathcal{G}_i}$. \\ \midrule
      $\mathcal{TRI}$ & Triplets retrieved by $\psi$ for decomposed subquestions.\\ \midrule
      $\mathcal{ET}$ & Evidence text generated from $\mathcal{TRI}$ for better reasoning.\\ \midrule
       $\mathbf{A}_{head}$ & Head agent draw conclusion based on merged knowledge. \\ \midrule
       \bottomrule
    \end{tabular}
  \end{adjustbox}
  \label{table:notation}
\end{table*}

\section{Baseline Description} \label{app:baseline}

Detailed description of used baselines are listed in Table~\ref{table:BaselinesDescribe}.

\begin{table*}[h]
\centering
\caption{Baselines}.
\begin{adjustbox}{max width=\textwidth} 
\begin{tabular}{cc|cc|ccccc}
    \toprule
    \multicolumn{3}{c}{\textbf{Used Baselines}} \\ \cline{1-3}
    \multirow{1}{*}{Type} & \multirow{1}{*}{Method} & \multirow{1}{*}{Description}\\ 
    \midrule
    \multirow{5}{*}{Embedding}  & KV-Mem~\cite{miller2016key}                
                                & Utilizes a Key-Value memory network to store triples, achieved multi-hop reasoning  \\ 
                                & GraftNet~\cite{sun2018open}                
                                & Use KG subgraphs to achieve reasoning  \\ 
                                & PullNet~\cite{sun2019pullnet}                 
                                & Using a graph neural network to retrieve a question-specific subgraph  \\
                                & EmbedKGQA~\cite{saxena2020improving}                
                                & Seeing reasoning on KG as a link prediction problem, using embeddings for calculation   \\ 
                                & TransferNet~\cite{shi2021transfernet}                 
                                & Uses graph neural network for calculating relevance in between entities  \\ 
                           
    \midrule
    \multirow{5}{*}{KG+LLM}     
                                & StructGPT \cite{jiang2023structgpt}                
                                & Utilizing an invoking-linearization-generation procedure to support reasoning \\
                                & Mindful-RAG \cite{agrawal2024mindful}
                                & Designed for intent-based and contextually aligned knowledge retrieval  \\
                                & Graph-RAG                 
                                & Standard KG-based RAG framework. \\
                                & RoG \cite{luo2023reasoning}
                                & Using a planning-retrieval-reasoning framework for reasoning  \\
                                & SPLIT-RAG                 
                                & Our framework, used subgraph partition combined with multi-agents \\
    \midrule
    
\end{tabular}
\end{adjustbox}
\label{table:BaselinesDescribe}
\end{table*}

\section{Proof of Effectiveness} \label{app:pr_eff}

\begin{theorem}[Information-Preserving Partitioning]
\label{thm:info-preserve}
Given the balancing factor $\lambda$ in Algorithm~\ref{alg:alg_gp}, the graph partitioning achieves:
\begin{equation} \label{eq3}
I(\mathcal{Q}; \mathcal{G}) \geq \frac{1}{\lambda}\left(\mathbb{E}[\mathrm{IG}(\mathcal{S}) - H(\mathcal{G})\right)
\end{equation}
where $I(\cdot;\cdot)$ denotes mutual information, $H(\cdot)$ is entropy, $\mathcal{Q}$ is the question distribution, and $\mathcal{G}$ is the subgraph collection.
\end{theorem}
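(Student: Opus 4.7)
}
The plan is to rearrange the target inequality into the equivalent form
\[
\lambda\, I(\mathcal{Q};\mathcal{G}) + H(\mathcal{G}) \;\geq\; \mathbb{E}\bigl[\mathrm{IG}(\mathcal{S})\bigr],
\]
and then to bound the right-hand side by expanding the definition of $\mathrm{IG}$ from Equation~\ref{eq:subpart}. The core identity I would exploit is that the size penalty aggregates into an entropy: with $P(s_i)=|\mathcal{V}_{s_i}|/|\mathcal{V}|$,
\[
\sum_{s_i\in\mathcal{S}} H(s_i) \;=\; -\sum_{s_i\in\mathcal{S}} \frac{|\mathcal{V}_{s_i}|}{|\mathcal{V}|}\log\frac{|\mathcal{V}_{s_i}|}{|\mathcal{V}|} \;=\; H(\mathcal{G}),
\]
which is precisely the entropy of a uniformly-sampled vertex's subgraph label. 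This immediately gives
\[
\mathbb{E}[\mathrm{IG}(\mathcal{S})] \;=\; \sum_{s_i\in\mathcal{S}} H(\mathcal{P}\mid s_i) \;-\; \lambda\, H(\mathcal{G}),
\]
so the result reduces to controlling the unweighted sum of conditional path entropies.

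Next, I would move from paths to questions through the data-processing inequality. Since paths $\mathbf{p}$ are a deterministic function of $(q,D)$ via the retriever $\psi$, the Markov chain $\mathcal{G}\to\mathcal{Q}\to\mathcal{P}$ yields $I(\mathcal{Q};\mathcal{G})\geq I(\mathcal{P};\mathcal{G})$, and equivalently $H(\mathcal{P}\mid\mathcal{G}) \geq H(\mathcal{P}) - I(\mathcal{Q};\mathcal{G})$. Using the standard decomposition $H(\mathcal{P}\mid\mathcal{G}) = \sum_{s_i} P(s_i)\,H(\mathcal{P}\mid s_i)$ together with $P(s_i)\leq 1$ for every cell, one obtains
\[
\sum_{s_i\in\mathcal{S}} H(\mathcal{P}\mid s_i) \;\leq\; H(\mathcal{P}\mid\mathcal{G}) + \Bigl(\sum_{s_i}\!\bigl(1-P(s_i)\bigr)\Bigr)\,H(\mathcal{P}),
\]
and the residual term is absorbed into $H(\mathcal{G})$ by Jensen's inequality applied to $-x\log x$ on the partition weights $\{P(s_i)\}$. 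Substituting these bounds into the expression for $\mathbb{E}[\mathrm{IG}(\mathcal{S})]$ and dividing by $\lambda$ will yield the stated inequality.

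The main obstacle I anticipate is the mismatch between the \emph{unweighted} sum $\sum_i H(\mathcal{P}\mid s_i)$ appearing in $\mathrm{IG}$ and the \emph{weighted} conditional entropy $H(\mathcal{P}\mid\mathcal{G})$ that mutual information naturally provides: the gap is governed by how far the partition departs from uniform, which is precisely what $H(\mathcal{G})$ measures. Getting the constants to line up so that this gap is exactly the $H(\mathcal{G})$ term on the right-hand side, with the $1/\lambda$ prefactor emerging from the trade-off between the conditional-entropy reward and the size penalty, is the delicate part. A secondary concern is making the expectation $\mathbb{E}[\cdot]$ well-defined: I would take it over the empirical distribution induced by $\mathcal{Q}_{\text{train}}$, which matches the definition of $H(\mathcal{P}\mid s_i)$ already used in Section~\ref{text:3-1} and keeps the data-processing step consistent.
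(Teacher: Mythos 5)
Your overall skeleton is the same as the paper's: expand $\mathbb{E}[\mathrm{IG}]$ from Eq.~\ref{eq:subpart}, recognize that the size penalties aggregate as $\sum_{s_i} H(s_i) = -\sum_i P(s_i)\log P(s_i) = H(\mathcal{G})$ (the paper uses this identity silently when it writes $\sum_i \lambda H(\mathcal{G}_i) = \lambda H(\mathcal{G})$), relate $I(\mathcal{P};\mathcal{G})$ to $I(\mathcal{Q};\mathcal{G})$ by data processing, and rearrange to obtain the $1/\lambda$ prefactor. One point in your favor: your Markov chain $\mathcal{G}\to\mathcal{Q}\to\mathcal{P}$, justified by $\mathcal{P}$ being a deterministic function of $\mathcal{Q}$ given the fixed KG, is the direction that actually yields $I(\mathcal{Q};\mathcal{G}) \geq I(\mathcal{P};\mathcal{G})$; the paper states the chain as $\mathcal{Q}\to\mathcal{P}\to\mathcal{G}$, for which the data processing inequality gives the \emph{opposite} inequality, so your formulation repairs a directional slip in the published proof.

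The genuine gap is exactly the step you flag as "the delicate part," and your proposed patch for it fails. The paper crosses the weighted/unweighted divide by fiat: it simply relabels the unweighted sum $\sum_i H(\mathcal{P}|\mathcal{G}_i)$ as $H(\mathcal{P}) - \sum_i \frac{|\mathcal{G}_i|}{|\mathcal{G}|}H(\mathcal{P}|\mathcal{G}_i) = I(\mathcal{P};\mathcal{G})$, which is not an identity. You correctly refuse to do that, but your bridge breaks at two points. First, the per-cell bound $H(\mathcal{P}\mid s_i) \leq H(\mathcal{P})$ underlying your displayed inequality is false in general: conditioning reduces entropy only \emph{in expectation}, and a single subgraph whose induced path distribution is near-uniform can have conditional entropy strictly larger than the marginal $H(\mathcal{P})$. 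Second, even granting that bound, the residual term is $\bigl(\sum_i (1-P(s_i))\bigr)H(\mathcal{P}) = (k-1)\,H(\mathcal{P})$ for a $k$-cell partition, which grows linearly in $k$, whereas $H(\mathcal{G}) \leq \log k$; no application of Jensen's inequality to $-x\log x$ on the weights $\{P(s_i)\}$ can absorb a term of that magnitude into $H(\mathcal{G})$. Tracking your own rearrangement, the target becomes $\lambda\, I(\mathcal{Q};\mathcal{G}) + (1+\lambda)H(\mathcal{G}) \geq \sum_i H(\mathcal{P}\mid s_i)$, and nothing in your chain of bounds produces this: with many subgraphs the right side can exceed the left by an unbounded amount. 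The only way the argument closes is the way the paper implicitly reads it, namely interpreting the first term of $\mathrm{IG}$ as the \emph{weighted} conditional entropy $H(\mathcal{P}\mid\mathcal{G}) = \sum_i P(s_i)H(\mathcal{P}\mid s_i)$, so that $\mathbb{E}[\mathrm{IG}]$ is by definition built from $I(\mathcal{P};\mathcal{G})$ up to sign conventions; your more careful accounting exposes that without this reinterpretation (or an added hypothesis such as near-uniform cell weights), the stated inequality does not follow.
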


\begin{proof}
Starting from the information gain definition:
\begin{align}
\mathbb{E}[\mathrm{IG}] &= \sum_{i=1}^k \left[H(\mathcal{P}|\mathcal{G}_i) - \lambda H(\mathcal{G}_i)\right] \\
&= \underbrace{H(\mathcal{P}) - \sum_{i=1}^k \frac{|\mathcal{G}_i|}{|\mathcal{G}|}H(\mathcal{P}|\mathcal{G}_i)}_{\text{Mutual information }I(\mathcal{P};\mathcal{G})} - \lambda H(\mathcal{G})
\end{align}
Applying the data processing inequality for the Markov chain $\mathcal{Q} \rightarrow \mathcal{P} \rightarrow \mathcal{G}$:
\begin{equation}
I(\mathcal{Q};\mathcal{G}) \geq I(\mathcal{P};\mathcal{G}) \geq \frac{1}{\lambda}\left(\mathbb{E}[\mathrm{IG}] - H(\mathcal{G})\right)
\end{equation}
\end{proof}

\begin{theorem}[Semantic Interpretability]
\label{thm:info-interpretably}
For subgraph $\hat{D}_i$ and its question cluster $Q_i$, the mutual information satisfies:
\begin{equation}
I(\hat{D}_i; Q_i) \geq \log|\mathcal{C}| - H(\hat{D}_i|Q_i)
\end{equation}
where $\mathcal{C}$ is the entity type set.
\end{theorem}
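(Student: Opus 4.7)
The plan is to start from the standard identity
\begin{equation*}
I(\hat{D}_i; Q_i) = H(\hat{D}_i) - H(\hat{D}_i \mid Q_i),
\end{equation*}
which immediately reduces the theorem to establishing the single lower bound $H(\hat{D}_i) \geq \log|\mathcal{C}|$. Once this is in hand, the claim follows by direct substitution, exactly as the $-H(\hat{D}_i|Q_i)$ term on the right-hand side of the target inequality is matched by the conditional entropy term on the right-hand side of the identity.

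To prove $H(\hat{D}_i) \geq \log|\mathcal{C}|$ I would lean on the construction guarantees from Algorithm~\ref{app:alg_gp} and Algorithm~\ref{app:alg_as}. Viewing $\hat{D}_i$ as a random variable induced by the partitioning distribution over its constituent paths $\tilde{p} \in \tilde{\mathcal{P}}$, the information-gain objective in Equation~\ref{eq:subpart} combines a conditional-entropy term with the size penalty $\mathcal{H}(s_i) = \tfrac{|\mathcal{V}_{s_i}|}{|\mathcal{V}|}\log\tfrac{|\mathcal{V}|}{|\mathcal{V}_{s_i}|}$, and the semantic-coherence constraint in Algorithm~\ref{app:alg_as} prevents any single entity type from dominating an agent's grouped subgraph. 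Together these ensure that the support of $\hat{D}_i$ spans all $|\mathcal{C}|$ entity types with non-degenerate mass. A standard maximum-entropy argument then yields the uniform baseline: for any distribution whose support resolves at least $|\mathcal{C}|$ distinguishable type-indexed outcomes, the entropy is at least $\log|\mathcal{C}|$.

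Combining the two steps gives
\begin{equation*}
I(\hat{D}_i; Q_i) \;=\; H(\hat{D}_i) - H(\hat{D}_i \mid Q_i) \;\geq\; \log|\mathcal{C}| - H(\hat{D}_i \mid Q_i),
\end{equation*}
which is the claim. As a consistency check, the bound degrades exactly when $H(\hat{D}_i \mid Q_i)$ is large, i.e.\ when questions in $Q_i$ fail to localise the subgraph — the intended semantic reading of ``interpretability''.

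The main obstacle is the lower bound $H(\hat{D}_i) \geq \log|\mathcal{C}|$, since it is not automatic from the partitioning alone: one needs to argue that neither the size-penalty nor the coherence constraint collapses the support to fewer than $|\mathcal{C}|$ type-distinguishing outcomes. I expect this to require a careful choice of the reference distribution on $\hat{D}_i$ (most naturally, the path-indexed distribution used in the definition of $\mathbf{IG}$ in Section~\ref{text:3-1}), together with the observation from Theorem~\ref{thm:info-preserve} that the partitioning preserves enough mutual information with $\mathcal{Q}$ to forbid type-collapsed subgraphs at the optimum. Everything else in the proof is bookkeeping on the standard entropy identities.
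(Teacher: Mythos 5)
Your opening reduction is correct and is in fact the cleanest way to read the theorem: by the identity $I(\hat{D}_i;Q_i)=H(\hat{D}_i)-H(\hat{D}_i\mid Q_i)$, the claim is exactly equivalent to $H(\hat{D}_i)\geq \log|\mathcal{C}|$. The genuine gap is in how you try to discharge that bound. Your ``standard maximum-entropy argument'' runs in the wrong direction: for a random variable supported on at least $|\mathcal{C}|$ type-indexed outcomes, $\log$ of the support size is an \emph{upper} bound on entropy, attained only at the uniform distribution; support coverage imposes no lower bound whatsoever. A subgraph whose induced type distribution touches every entity type but places mass $1-\delta$ on a single type has $H(\hat{D}_i)\to 0$ as $\delta\to 0$, and nothing in Algorithm~\ref{app:alg_gp} or Algorithm~\ref{app:alg_as} rules this out: the size penalty $\mathcal{H}(s_i)$ constrains $|\mathcal{V}_{s_i}|/|\mathcal{V}|$, not type frequencies, and the coherence constraint lower-bounds average similarity to a centroid --- if anything it pushes subgraph groups toward \emph{concentration} on few types rather than uniformity. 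So ``non-degenerate mass on all $|\mathcal{C}|$ types'' is both unestablished and, even if established, insufficient; what you actually need is (near-)uniformity of the type marginal of $\hat{D}_i$, which neither the $\mathbf{IG}$ objective nor Theorem~\ref{thm:info-preserve} (which bounds the aggregate $I(\mathcal{Q};\mathcal{G})$ over the whole partition, not any per-cluster entropy) supplies.

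For comparison, the paper expands the mutual information on the other side, $I=H(Q_i)-H(Q_i\mid \hat{D}_i)$, asserts $H(Q_i\mid\hat{D}_i)\leq\epsilon$ with $\epsilon$ ``controlled by the $H(\hat{D}_i|Q_i)$ bound,'' and invokes $H(Q_i)\geq\log|\mathcal{C}|$ --- i.e.\ it places essentially the same unproven uniformity premise on the question-cluster side, and additionally needs $H(Q_i\mid\hat{D}_i)\leq H(\hat{D}_i\mid Q_i)$, which is not a general fact. Since $H(\hat{D}_i)=I(\hat{D}_i;Q_i)+H(\hat{D}_i\mid Q_i)$, the two routes are equivalent reformulations sharing the same missing ingredient, so your instinct that $H(\hat{D}_i)\geq\log|\mathcal{C}|$ is ``the main obstacle'' is exactly right, and you were honest in flagging it. But the sketch you offer to close it would fail as stated; a correct proof requires adding an explicit hypothesis (e.g.\ that the type marginal of $\hat{D}_i$, or of $Q_i$ in the paper's variant, is uniform over $\mathcal{C}$), at which point the theorem reduces to the one-line identity you started from.
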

\begin{proof}
From clustering objective in Eq.~\ref{eq3}:
\begin{equation}
I = H(Q_i) - H(Q_i|\hat{D}_i) \geq H(Q_i) - \epsilon \geq \log|\mathcal{C}| - \epsilon
\end{equation}
with $\epsilon$ controlled by the $H(\hat{D}_i|Q_i)$ bound. 
\end{proof}





\begin{theorem}[Search Space Reduction]
\label{thm:complexity}
The expected retrieval time satisfies:
\begin{equation}
\mathbb{E}[T_{\mathrm{retrieve}}] \leq \frac{T_{\mathrm{full}}}{\exp(I(\mathcal{Q};\mathcal{G}))}
\end{equation}
where $T_{\mathrm{full}}$ is the full-graph search time.
\end{theorem}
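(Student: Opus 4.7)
The plan is to reduce Theorem~\ref{thm:complexity} to an information-theoretic typical-set argument layered on top of Theorem~\ref{thm:info-preserve}. I would first model the local retrieval cost on a single subgraph as proportional to its effective support size, so that the uninformed full-graph baseline satisfies $T_{\mathrm{full}} \asymp \exp(H(\mathcal{G}))$ by the entropy-cardinality characterization: an oblivious retriever must scan essentially the entire typical support of $\mathcal{G}$, whose log-size is $H(\mathcal{G})$.

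Next, I would view the routing induced by the Subgraph-Agent-Question matching of Section~\ref{text:3-2} as drawing $\mathcal{G}$ from the conditional distribution $P(\mathcal{G} \mid q)$ for each incoming $q$. By the asymptotic equipartition property applied to the joint $(\mathcal{Q},\mathcal{G})$ law, the collection of subgraph hypotheses that a query-aware retriever actually probes concentrates on a typical set of size $\exp(H(\mathcal{G} \mid \mathcal{Q}))$, so
\[
\mathbb{E}[T_{\mathrm{retrieve}}] \leq c \exp\bigl(H(\mathcal{G} \mid \mathcal{Q})\bigr)
\]
for an absolute constant $c$ absorbing subleading terms. Combining with the mutual-information identity $H(\mathcal{G} \mid \mathcal{Q}) = H(\mathcal{G}) - I(\mathcal{Q};\mathcal{G})$ yields
\[
\mathbb{E}[T_{\mathrm{retrieve}}] \leq c \exp\bigl(H(\mathcal{G})\bigr) \exp\bigl(-I(\mathcal{Q};\mathcal{G})\bigr) = \frac{T_{\mathrm{full}}}{\exp(I(\mathcal{Q};\mathcal{G}))},
\]
after folding the constant into the $\asymp$ relation between $T_{\mathrm{full}}$ and $\exp(H(\mathcal{G}))$. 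The $O(\log k)$ B-tree indexing overhead flagged in Section~\ref{text:3-6} enters only as a multiplicative logarithmic factor and is absorbed without affecting the exponential reduction.

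The hard part will be rigorously tying ``search time'' to $\exp(H)$, since cost is not a priori an entropic quantity. I would handle this by defining the retriever's per-question workload as the number of path hypotheses enumerated inside the selected subgraph, then invoking AEP on the path distribution $\mathcal{P}$ from Section~\ref{text:3-1}, whose entropy drives the typical-set size. If the AEP framing proves delicate in the finite-sample regime, a fallback is a maximum-entropy counting bound: the number of subgraphs any consistent routing policy can activate under $P(\mathcal{G} \mid \mathcal{Q})$ is at most $\exp(H(\mathcal{G} \mid \mathcal{Q}))$, and charging a uniform per-subgraph cost recovers the same exponential factor. Either route inherits the mutual-information guarantee of Theorem~\ref{thm:info-preserve}, which is what closes the argument.
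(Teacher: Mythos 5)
There is a genuine gap, and it sits at the single load-bearing step of your argument: the claim that a query-aware retriever's expected workload satisfies $\mathbb{E}[T_{\mathrm{retrieve}}] \leq c\,\exp\bigl(H(\mathcal{G}\mid\mathcal{Q})\bigr)$. Shannon entropy bounds support size from \emph{below}, not above: the correct direction of the entropy--cardinality inequality is $H(X) \leq \log|\mathrm{supp}(X)|$, i.e.\ $\exp(H(X)) \leq |\mathrm{supp}(X)|$. Your ``maximum-entropy counting bound'' --- that the number of subgraphs a consistent routing policy can activate under $P(\mathcal{G}\mid q)$ is at most $\exp(H(\mathcal{G}\mid\mathcal{Q}))$ --- inverts this inequality and is false: a conditional law placing mass $1-\epsilon$ on one subgraph and $\epsilon$ spread over $M$ others has conditional entropy roughly $\epsilon\log M$, hence $\exp(H)\approx M^{\epsilon}$, while an exhaustive-over-support policy may still probe order $\epsilon M$ subgraphs in expectation, which is exponentially larger for small $\epsilon$. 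This is exactly the phenomenon behind Massey's guessing-entropy results: expected sequential search cost is \emph{lower}-bounded by (roughly) $2^{H}$, and its correct exponential characterization involves R\'enyi entropy of order $1/2$, not Shannon entropy. The AEP route does not rescue the step either, since equipartition applies to long i.i.d.\ blocks, whereas each retrieval here is a single draw of $(\mathcal{Q},\mathcal{G})$ with no amortization across queries; your own hedge about the ``finite-sample regime'' is pointing at precisely this failure, and the fallback you offer for it is the inverted inequality above. Once this step is removed, the identity $H(\mathcal{G}\mid\mathcal{Q}) = H(\mathcal{G}) - I(\mathcal{Q};\mathcal{G})$ has nothing to act on. (The companion calibration $T_{\mathrm{full}} \asymp \exp(H(\mathcal{G}))$ is also only safe in the direction $T_{\mathrm{full}} \geq \exp(H(\mathcal{G}))$, but that direction happens to be the one your final chain needs, so it is not the fatal issue.)

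For comparison, the paper's own proof takes a different and much more modest route: it \emph{postulates} the per-subgraph cost $T_i = O\bigl(N_i \cdot \exp(-I(\mathcal{Q};\mathcal{G}_i))\bigr)$ --- that is, the exponential mutual-information discount you tried to derive is there an assumption on the cost model --- and then averages over subgraphs with weights $N_i/N$ and invokes Jensen's inequality on $f(x)=\exp(-x)$ to pass from the weighted average of $\exp(-I(\mathcal{Q};\mathcal{G}_i))$ to $\exp(-I(\mathcal{Q};\mathcal{G}))$. Your instinct to actually derive the discount from first principles is more ambitious than what the paper does, but any honest derivation has to either adopt the R\'enyi-entropy exponent (changing the statement of Theorem~\ref{thm:complexity}) or impose distributional conditions (e.g.\ near-uniform conditional laws, under which $\exp(H(\mathcal{G}\mid\mathcal{Q}))$ does approximate the effective search set) under which the Shannon form becomes valid; as written, the proposal proves a statement that is false in general.
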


\begin{proof}
Let $N_i = |\mathcal{G}_i|$ be the size of subgraph $i$. The per-subgraph search complexity is:
\begin{equation}
T_i = O\left(N_i \cdot \exp(-I(\mathcal{Q};\mathcal{G}_i))\right)
\end{equation}
Applying Jensen's inequality to the convex function $f(x) = \exp(-x)$:
\begin{align}
\mathbb{E}[T] &= \sum_{i=1}^k \frac{N_i}{N} T_i \\
&\leq T_{\mathrm{full}} \cdot \exp\left(-\sum_{i=1}^k \frac{N_i}{N}I(\mathcal{Q};\mathcal{G}_i)\right) \\
&\leq \frac{T_{\mathrm{full}}}{\exp(I(\mathcal{Q};\mathcal{G}))}
\end{align}
\end{proof}


\begin{figure}[h]
  \centering
  \includegraphics[width=\columnwidth]{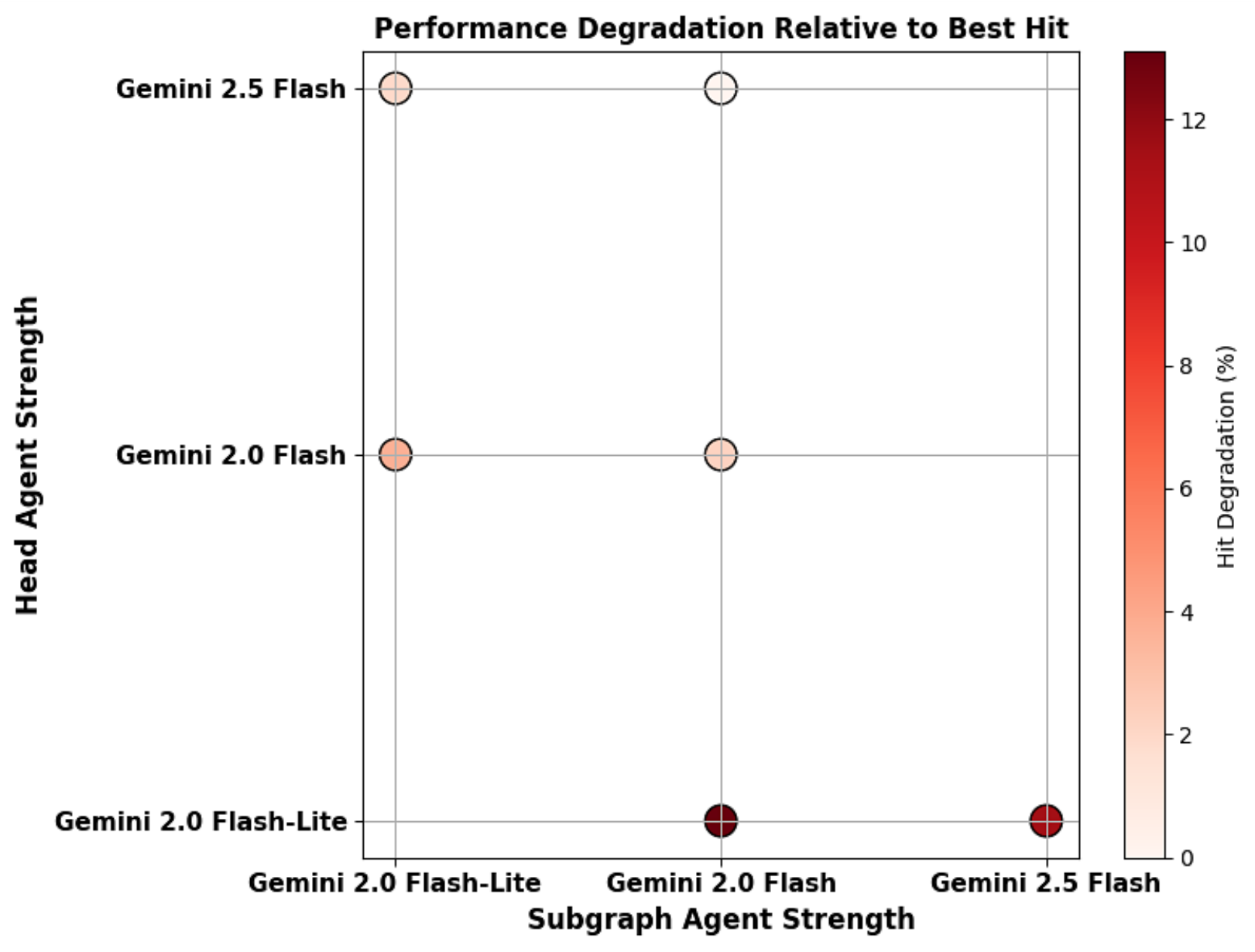}
  \caption{Performance degradation relative to the best H@1. Darker color indicates larger degradation.}
  \label{fig:degradation}
\end{figure}

\section{Extended results for model flexibility}
\label{app:diffAgents-full}

\begin{table*}[h]
  \centering
  \caption{Full results on MetaQA-3Hop using different model tiers for agents (end-to-end per QA).}
  \label{table:diffAgents}
  \begin{adjustbox}{max width=\textwidth}
    \begin{tabular}{c|c|c|cc|c}
      \toprule
      Group & \(\mathcal{A}_i\) & \(\mathbf{A}_{\text{head}}\) & \textbf{Hit} & \textbf{H@1} & \textbf{Avg Time (s)}\\
      \midrule\midrule
      G1 & Gemini 2.0 Flash & Gemini 2.0 Flash-Lite & 81.6 & 79.9 & 29.1\\
      G2 & Gemini 2.5 Flash Preview 04-17 & Gemini 2.0 Flash-Lite & 83.1 & 81.5 & 38.8 \\
      G3 & Gemini 2.0 Flash-Lite & Gemini 2.0 Flash & 90.4 & 87.3 & 20.2\\
      G4 & Gemini 2.0 Flash & Gemini 2.0 Flash & 91.8 & 88.5 & 33.7\\
      G5 & Gemini 2.0 Flash-Lite & Gemini 2.5 Flash Preview 04-17 & 92.1 & 89.2 & 28.6\\
      G6 & Gemini 2.0 Flash & Gemini 2.5 Flash Preview 04-17 & \textbf{93.9} & \textbf{90.7} & 37.1\\
      \bottomrule
    \end{tabular}
  \end{adjustbox}
\end{table*}

\paragraph{Additional notes on fairness.}

All runs use the fixed decoding and retrieval budgets for MetaQA2-hop and 3-hop are in Table~\ref{table:agent-budgets-metaqa}. While for WebQSP and CWQ datasets the results are recorded in Table~\ref{table:agent-budgets-webcwq}
The wall-clock time is measured end-to-end per QA, averaging over the MetaQA-3Hop test split.


\begin{table*}[t]
\centering
\caption{Illustrative Example of Three Formats Training Questions Stored in Question Base}.
  \begin{adjustbox}{max width=\textwidth}
    \begin{tabular}{c|l}
      \toprule
      \textbf{Question Format} & \textbf{Example} \\ \midrule\midrule
      Raw Question &
        the films that share directors with the film \texttt{[Black Snake Moan]} were in which genres \\ \midrule
      Semantic Context &
        films share directors film \texttt{[Black Snake Moan]} genres \\ \midrule
      Entity-Type Context &
        movie share director movie genre \\ \midrule
      Path Context &
        movie-[directed\_by]-director-[directed\_by]-\{\texttt{Black Snake Moan}\}-[has\_genre]-genre \\
      \bottomrule
    \end{tabular}
    \label{table:exampleQue}
  \end{adjustbox}
\end{table*}

\section{Experiment Data and Prompt Template}\label{app:example}

\begin{table}[h]
  \centering
  \caption{Fixed decoding \& retrieval budgets (WebQSP / CWQ).}
  \label{table:agent-budgets-webcwq}
  \vspace{-2mm}
  \begin{adjustbox}{max width=\columnwidth}
  \begin{tabular}{l|c|c}
    \toprule
    Setting & \(\{\mathcal{A}_i\}\) (subgraph agents) & \(\mathbf{A}_{\text{head}}\) \\
    \midrule
    Max input (tokens) per call & 3{,}000 & 4{,}096 \\
    Max new tokens & 96 & 256 \\
    Temperature / Top-$p$ & 0.2 / 0.9 & 0.3 / 0.95 \\
    Stop sequences & \texttt{\textbackslash n\textbackslash n}, KB-EOF & \texttt{\textbackslash n\textbackslash n}, Answer-EOF \\
    Tool-calls / function-calls & disabled & disabled \\
    Evidence cap to head (tokens) & --- & 3{,}200 \,(\textit{rank by Eq.~\eqref{eq:triplet-score}}) \\
    Max subgraphs per question \(B\) & 4 & 4 \\
    Path depth / top-$K$ per subgraph & depth\(\le 2\), top-$K=8$ & --- \\
    Max triplets per agent & 600 & --- \\
    \bottomrule
  \end{tabular}
  \end{adjustbox}
  \vspace{-2mm}
\end{table}

\subsection{Experiment compute resources}

Graph-partitioning and local inference with Llama-3-8B were executed on four NVIDIA RTX A5000 GPUs (24 GB VRAM each).
Calls to Gemini and GPT were made through remote APIs, so no local GPU was required; a CPU-only setup was sufficient.
All experiments ran on a server equipped with an Intel Xeon Silver 4310 (12 cores, 2.10 GHz) processor.

\subsection{Example Data}
\subsubsection{Example of how training questions are preprocessed and stored} \label{app:training_questions_preprocessing}

Examples of different formats of questions are presented in Table~\ref{table:exampleQue}. Beside raw question, semantic context, entity-type context and path context are used in our experiments.

\subsection{Prompt Template}

\subsubsection{Subgraph Control Agent $\mathcal{A}_i$ Prompt}
\begin{verbatim}
Given Subgraph Context:
- Entity Types: {<T_entity>}
- Relation Types: {<T_relations>}
- Coverage: This subgraph focuses on <label> relationships
Given Current Subquestion: <Q_sub>

Task:
1. Analyze the subquestion's core information need
2. Generate {SPARQL/Cypher} query matching the subgraph schema 

Critical Constraints:
- Use ONLY entities/relations from the subgraph context
- Query returns triples that directly answer the subquestion

Output Requirements:
{
  "query": "<generated_query>",
  "reasoning": "<brief_explanation_of_strategy>",
}
\end{verbatim}

\subsubsection{Head Agent $A_{H}$ Prompt}
\begin{verbatim}
Given verified facts: <T_clean> 
Supporting evidence: <E_all>
Original question: <q_new>
Generate final answer with explanations, 
resolving any remaining ambiguities.
\end{verbatim}



\end{document}